\definecolor{Gray}{gray}{0.9}
\newcommand{\figleft}{{\em (Left)}}
\newcommand{\figcenter}{{\em (Center)}}
\newcommand{\figright}{{\em (Right)}}
\def\eqref#1{equation~\ref{#1}}
\def\1{\bm{1}}
\DeclareMathAlphabet{\mathsfit}{\encodingdefault}{\sfdefault}{m}{sl}
\SetMathAlphabet{\mathsfit}{bold}{\encodingdefault}{\sfdefault}{bx}{n}
\def\gA{{\mathcal{A}}}
\def\gD{{\mathcal{D}}}
\def\gE{{\mathcal{E}}}
\def\gF{{\mathcal{F}}}
\def\gS{{\mathcal{S}}}
\newcommand{\E}{\mathbb{E}}
\DeclareMathOperator*{\argmax}{arg\,max}
\theoremstyle{plain}
\newtheorem{theorem}{Theorem}[section]
\newtheorem{proposition}[theorem]{Proposition}
\newtheorem{lemma}[theorem]{Lemma}
\theoremstyle{definition}
\theoremstyle{remark}
\title{Imitating Past Successes can be Very Suboptimal}
\author{%
  Benjamin Eysenbach$^\alpha$ \quad Soumith Udatha$^\alpha$ \quad Sergey Levine$^\beta$ \quad Ruslan Salakhutdinov$^\alpha$ \\
  $^\alpha$Carnegie Mellon University \qquad $^\beta$UC Berkeley \\
  \texttt{beysenba@cs.cmu.edu} \\
}
\begin{document}

\maketitle

\begin{abstract}
Prior work has proposed a simple strategy for reinforcement learning (RL): label experience with the outcomes achieved in that experience, and then imitate the relabeled experience. These outcome-conditioned imitation learning methods are appealing because of their simplicity, strong performance, and close ties with supervised learning. However, it remains unclear how these methods relate to the standard RL objective, reward maximization. In this paper, we formally relate outcome-conditioned imitation learning to reward maximization, drawing a precise relationship between the learned policy and Q-values and explaining the close connections between these methods and prior EM-based policy search methods. This analysis shows that existing outcome-conditioned imitation learning methods do not necessarily improve the policy, but a simple modification results in a method that does guarantee policy improvement, under some assumptions.
\end{abstract}

\section{Introduction}

Recent work has proposed methods for reducing reinforcement learning (RL) to a supervised learning problem using hindsight relabeling~\citep{kaelbling1993learning, andrychowicz2017hindsight,ding2019goal, sun2019policy, ghosh2020learning, paster2020planning}. These approaches label each state action pair with an \emph{outcome} that happened in the future (e.g., reaching a goal), and then learn a conditional policy by treating that action as optimal for making that outcome happen.
While most prior work defines outcomes as reaching goal states~\citep{savinov2018semi, ding2019goal, sun2019policy, ghosh2020learning, paster2020planning, Yang2021MHERMH}, some work defines outcomes in terms of rewards~\citep{srivastava2019training, kumar2019reward, chen2021decision}, language~\citep{lynch2021language,nguyen2021interactive}, or sets of reward functions~\citep{li2020generalized, eysenbach2020rewriting}. We will refer to these methods as \emph{outcome-conditioned behavioral cloning} (OCBC).

OCBC methods are appealing because of their simplicity and strong performance~\citep{ghosh2020learning, chen2021decision, emmons2021rvs}. Their implementation mirrors standard supervised learning problems. These methods do not require learning a value function and can be implemented without any reference to a reward function. Thus, OCBC methods might be preferred over more standard RL algorithms, which are typically challenging to implement correctly and tune~\citep{henderson2018deep}.

However, these OCBC methods face a major challenge: it remains unclear whether the learned policy is actually optimizing any control objective. Does OCBC correspond to maximizing some reward function? If not, can it be mended such that it does perform reward maximization? Understanding the theoretical underpinnings of OCBC is important for determining how to correctly apply this seemingly-appealing class of methods. It is important for diagnosing why existing implementations can fail to solve some tasks~\citep{paster2020planning, kostrikov2021offline}, and is important for predicting when these methods will work effectively. Relating OCBC to reward maximization may provide guidance on how to choose tasks and relabeling strategies to maximize a desired reward function.

The aim of this paper is to understand how OCBC methods relate to reward maximization. The key idea in our analysis is to decompose OCBC methods into a two-step procedure: the first step corresponds to averaging together experience collected when attempting many tasks, while the second step reweights the combined experience by task-specific reward functions. This averaging step is equivalent to taking a convex combination of the initial task-conditioned policies, and this averaging can \emph{decrease} performance on some tasks. The reweighting step, which is similar to EM-based policy search, corresponds to policy improvement. While EM-based policy search methods are guaranteed to converge, we prove that OCBC methods can fail to converge because they interleave an averaging step. 
Because this problem has to do with the underlying distributions, it cannot be solved by using more powerful function approximators or training on larger datasets. While prior work has also presented failure cases for OCBC~\citep{paster2020planning}, our analysis provides intuition into when and why these failure cases arise.

The main contribution of our work is an explanation of how outcome-conditioned behavioral cloning relates to reward maximization. We show that outcome-conditioned behavioral cloning does not, in general, maximize performance. In fact, it can result in decreasing performance with successive iterations. However, by appropriately analyzing the conditional probabilities learned via outcome-conditioned behavioral cloning, we show that a simple modification results in a method with guaranteed improvement under some assumptions. 
Our analysis also provides practical guidance on applying OCBC methods (e.g., should every state be labeled as a success for some task?).

\section{Preliminaries}
\label{sec:prelims}

We focus on a multi-task MDP with states $s \in \gS$, actions $a \in \gA$\footnote{Most of our analysis applies to both discrete and continuous state and action spaces, with the exception of Lemma~\ref{lemma:pi}.}, initial state distribution $p_0(s_0)$, and dynamics $p(s_{t+1} \mid s_t, a_t)$. We use a random variable $e \in \gE$ to identify each task; $e$ can be either continuous or discrete. For example, tasks might correspond to reaching a goal state, so $\gE = \gS$. At the start of each episode, a task $e \sim p_e(e)$ is sampled, and the agent receives the task-specific reward function $r_e(s_t, a_t) \in [0, 1]$ in this episode. We discuss the single task setting in Sec.~\ref{sec:em}. We use $\pi(a \mid s, e)$ to denote the policy for achieving outcome $e$. Let $\tau \triangleq (s_0, a_0, s_1, a_1, \cdots)$ be an infinite length trajectory. We overload notation and use $\pi(\tau \mid e)$ as the probability of sampling trajectory $\tau$ from policy $\pi(a \mid s, e)$. The objective and Q-values are:
{\footnotesize \begin{equation}
    \max_\pi \; \E_{p_e(e)} \bigg[\E_{\pi(\tau \mid e)} \Big[\sum_{t=0}^\infty \gamma^t r_e(s_t, a_t) \Big] \bigg], \qquad Q^{\pi(\cdot \mid \cdot, e)}(s, a, e) \triangleq \E_{\pi(\tau \mid e)}  \bigg[\sum_{t=0}^\infty \gamma^t r_e(s_t, a_t) \bigg \vert \substack{s_0 = s\\ a_0 = a}\bigg]. \label{eq:Q}
\end{equation}}\!\!
Because the state and action spaces are the same for all tasks, this multi-task RL problem is equivalent to the multi-objective RL problem~\citep{van2014multi, mossalam2016multi, li2020deep}.
Given a policy $\pi(a \mid s, e)$ and its corresponding Q-values $Q^{\pi(\cdot \mid \cdot, e)}$, \emph{policy improvement} is a procedure that produces a new policy $\pi'(a \mid s, e)$ with higher average Q-values:
\begin{equation*}
    \E_{{\color{blue}\pi'(\tau \mid e)}}\left[Q^{\pi(\cdot \mid \cdot, e)}(s, a, e) \right] > \E_{{\color{blue}\pi(\tau \mid e)}}\left[Q^{\pi(\cdot \mid \cdot, e)}(s, a, e) \right] \quad \text{for all states}\; s.
\end{equation*}
Policy improvement is most often implemented by creating a new policy that acts greedily with respect to the Q-function, but can also be implemented by reweighting the action probabilities of the current policy by (some function of) the Q-values~\citep{peters2007reinforcement,neumann2009fitted, geist2019theory}. Policy improvement is guaranteed to yield a policy that gets higher returns than the original policy, provided the original policy is not already optimal~\citep{sutton2018reinforcement}.
\emph{Policy iteration} alternates between policy improvement and estimating the Q-values for the new policy, and is guaranteed to converge to the reward-maximizing policy~\citep{sutton2018reinforcement}.

\subsection{Outcome-Conditioned Behavioral Cloning (OCBC)}

Many prior works have used hindsight relabeling to reduce the multi-task RL problem to one of conditional imitation learning. Given a dataset of trajectories, these methods label each trajectory with a task that the trajectory completed, and then perform task-conditioned behavioral cloning~\citep{savinov2018semi, ding2019goal, sun2019policy, kumar2019reward, srivastava2019training,ghosh2020learning, paster2020planning,li2020generalized, eysenbach2020rewriting, Yang2021MHERMH, chen2021decision, lynch2021language}. We call this method outcome-conditioned behavioral cloning (OCBC). While OCBC can be described in different ways (see, e.g.,~\citep{emmons2021rvs, ding2019goal}), our description below allows us to draw a precise connection between the OCBC problem statement and the standard MDP.

The input to OCBC is a behavioral policy $\beta(a \mid s)$, and the output is an outcome-conditioned policy $\pi(a \mid s, e)$. While the behavioral policy is typically left as an arbitrary choice, in our analysis we will construct this behavioral policy out of task-conditioned behavioral policies $\beta(a \mid s, e)$. This construction will allow us to say whether the new policy $\pi(a \mid s, e)$ is better than the old policy $\beta(a \mid s, e)$. We define $\beta(\tau)$ and $\beta(\tau \mid e)$ as the trajectory probabilities for behavioral policies $\beta(a \mid s)$ and $\beta(a \mid s, e)$, respectively.

\begin{figure}[t]
    \centering
    \vspace{-2em}
    \includegraphics[width=0.7\linewidth]{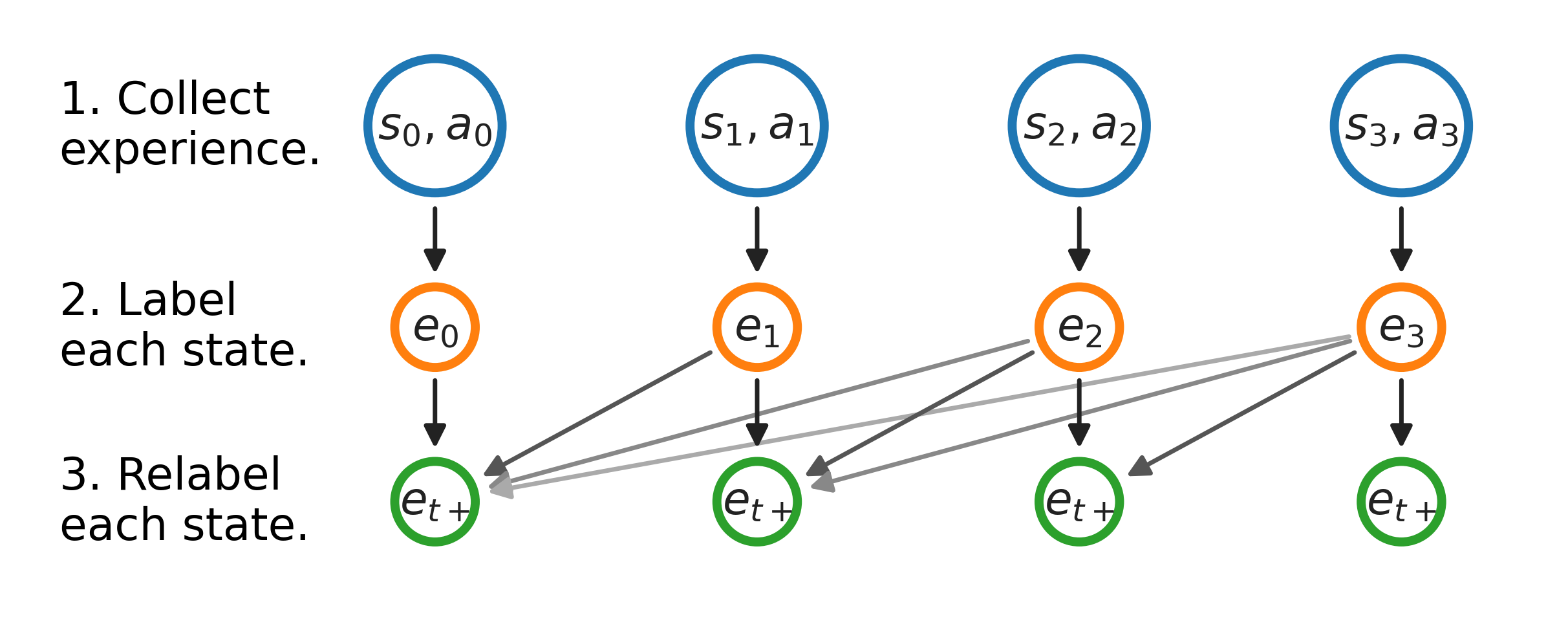}
    \vspace{-1.5em} 
    \caption{\footnotesize \textbf{Labeling and \emph{re}labeling}:  \emph{(1)} OCBC collects experience, and then \emph{(2)} labels every state with a task that was completed \emph{at that state}. For example, in goal-reaching problems we have $e_t = s_t$, as state $s_t$ completes the task of reaching goal $s_t$. \emph{(3)} Finally, each state is \emph{relabeled} with a task that was completed \emph{in the future}.Each state may be relabeled with \emph{multiple} tasks, as indicated by the extra green circles.}
    \label{fig:relabel}
    \vspace{-1em}
\end{figure}
The key step in OCBC is choosing which experience to use for learning each task, a step we visualize in Fig.~\ref{fig:relabel}. After collecting experience with the behavioral policy $\beta(a \mid s)$ ({\color{blue}blue circles}), \emph{every} state $s_t$ in this experience is labeled with an outcome $e_t$ that is achieved at that state ({\color{orange} orange circles}). Most commonly, the outcome is equal to the state itself: $e_t = s_t$~\citep{ding2019goal, lynch2020grounding, ghosh2020learning, srivastava2019training}, though prior work considers other types of outcomes, such as natural language descriptions~\citep{lynch2021language}.  We define $p(e_t \mid s_t)$ as the distribution of outcomes achieved at state $s_t$.
Finally, we define a random variable $e_{t+}$ to represent the future outcomes for state $s_t$ and action $a_t$. The future outcome is defined as the outcomes for states that occur in the $\gamma$-discounted future. Formally, we can define this distribution over future outcomes as:
{\footnotesize \begin{align}
    p^{\beta(a \mid s)}(e_{t+} \mid s_t, a_t) 
    \triangleq (1 - \gamma) \E_{\beta(\tau)}\left[\sum_{t=0}^\infty \gamma^t p(e_t = e_{t+} \mid s_t) \right]. \label{eq:future-outcomes}
\end{align}}\!\!
\vspace{-1em}
\begin{wrapfigure}[10]{R}{0.5\textwidth}
    \vspace{-1.5em}
    \centering
    \begin{minipage}{0.48\textwidth}
        \begin{algorithm}[H]
        \caption{\footnotesize Outcome-conditioned behavioral cloning (OCBC)}\label{alg:ocbc}
        \begin{algorithmic}[0] \footnotesize 
        \State \textbf{input} behavior policy $\beta(a \mid s)$
        \State $\gD \gets \textsc{Relabel}(\tau)$ where $\tau \sim \beta(\tau)$ \Comment{See Fig.~\ref{fig:relabel}.}
        \State $\gF(\pi; \beta) \gets \E_{(s_t, a_t, e_{t+}) \sim \gD}\left[\log \pi(a_t \mid s_t, e_{t+}) \right]$
        \State $\pi(a \mid s, e) \gets \argmax_\pi \gF(\pi)$
        \State \textbf{return} $\pi(a \mid s, e)$
        \end{algorithmic}
        \end{algorithm}
    \end{minipage}
\end{wrapfigure}

Outcome-conditioned behavioral cloning then performs conditional behavioral cloning on this relabeled experience. Using $p^\beta(s, a)$ to denote the distribution over states and actions visited by the behavioral policy and $p^\beta(e_{t+} \mid s, a)$ to denote the distribution over future outcomes (Eq.~\ref{eq:future-outcomes}), we can write the conditional behavioral cloning objective as
{\footnotesize \begin{align}
    \hspace{-1em} \gF(\pi; \beta) = \E_{\substack{p^\beta(e_{t+} \mid s_t, a_t)\\p^\beta(s_t, a_t)}}\left[\log \pi(a_t \mid s_t, e_{t+}) \right]. \label{eq:ocbc-obj}
\end{align}}\!\!
We define $\pi_O(a \mid s, e) \triangleq \argmax_\pi \gF(\pi; \beta)$ as the solution to this optimization problem. We summarize the OCBC training procedure in Alg.~\ref{alg:ocbc}.

\section{Relating OCBC to Reward Maximization}

In this section, we show that the conventional implementation of OCBC does not quite perform reward maximization. This analysis identifies why OCBC might perform worse, and provides guidance on how to choose tasks and perform relabeling so that OCBC methods perform better. The analysis also allows us to draw an equivalence with prior EM-based policy search methods.

\subsection{What are OCBC methods \emph{trying} to do?}

The OCBC objective (Eq.~\ref{eq:ocbc-obj}) corresponds to a prediction objective, but we would like to solve a control objective: maximize the probability of achieving outcome $e$. We can write this objective as the  likelihood of the desired outcome under the future outcome distribution (Eq.~\ref{eq:future-outcomes}) following the task-conditioned policy $\pi(a \mid s, e)$:
{\footnotesize \begin{align*}
    \E_{e^* \sim p_e(e)} \left[p^{\pi(\cdot \mid \cdot, e^*)}(e_{t+} = e^*)\right]
    &= \E_{e^* \sim p_e(e), \pi(\tau \mid e = e^*)}\bigg[\sum_{t=0}^\infty \gamma^t \underbrace{(1 - \gamma) p(e_t = e^* \mid s_t)}_{\triangleq r_{e^*}(s_t, a_t)} \bigg]
\end{align*}}\!\!
By substituting the definition of the future outcome distribution (Eq.~\ref{eq:future-outcomes}) above, we see that this control objective is equivalent to a standard multi-task RL problem (Eq.~\ref{eq:Q}) with the reward function:  $r_e(s_t, a_t) \triangleq (1 - \gamma) p(e_t = e \mid s_t)$.
\emph{Do OCBC methods succeed in maximizing this RL objective?}

\subsection{How does OCBC relate to reward maximization?}
\label{sec:two-step}

While it is unclear whether OCBC methods maximize rewards, the OCBC objective is closely related to reward maximizing because the distribution over future outcomes looks like a value function:
\begin{proposition} \label{prop:q}
The distribution of future outcomes (Eq.~\ref{eq:future-outcomes}) is equivalent to a Q-function for the reward function $r_e(s_t)$: $p^{\beta(\cdot \mid \cdot)}(e_{t+} = e\mid s_t, a_t) = Q^{\beta(\cdot \mid \cdot)}(s_t, a_t, e)$.
\end{proposition}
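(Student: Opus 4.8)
The plan is to verify the claimed identity by direct substitution of the reward function $r_e(s_t, a_t) \triangleq (1 - \gamma)\, p(e_t = e \mid s_t)$ into the definition of the Q-function (Eq.~\ref{eq:Q}), and then to match the resulting expression term-by-term with the definition of the future-outcome distribution (Eq.~\ref{eq:future-outcomes}). Here the (un-task-conditioned) behavioral policy $\beta(a \mid s)$ plays the role of $\pi$ in Eq.~\ref{eq:Q}, so the expectation $\E_{\pi(\tau \mid e)}$ is replaced by $\E_{\beta(\tau)}$, and the reward depends only on the state $s_t$ (through $p(e_t = e \mid s_t)$), though nothing in the argument requires this.

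First I would expand the Q-value with initial conditions fixed at $(s_t, a_t)$:
\begin{align*}
Q^{\beta(\cdot \mid \cdot)}(s_t, a_t, e) = \E_{\beta(\tau)}\left[\sum_{k=0}^\infty \gamma^k r_e(s_k, a_k) \,\middle|\, s_0 = s_t,\, a_0 = a_t\right].
\end{align*}
Then I would substitute $r_e(s_k, a_k) = (1 - \gamma)\, p(e_k = e \mid s_k)$ and pull the constant $(1 - \gamma)$ outside the sum and expectation, obtaining
\begin{align*}
Q^{\beta(\cdot \mid \cdot)}(s_t, a_t, e) = (1 - \gamma)\, \E_{\beta(\tau)}\left[\sum_{k=0}^\infty \gamma^k\, p(e_k = e \mid s_k) \,\middle|\, s_0 = s_t,\, a_0 = a_t\right],
\end{align*}
which is exactly the right-hand side of Eq.~\ref{eq:future-outcomes} evaluated at $e_{t+} = e$. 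The shared prefactor $(1 - \gamma)$ matches, and the discounted sums over $p(\cdot = e \mid s_k)$ coincide, giving the stated equality.

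The one place requiring care — and the main (minor) obstacle — is the notational overloading of the time index in Eq.~\ref{eq:future-outcomes}: the subscript $t$ names both the conditioning time (in $s_t, a_t$ on the left) and the dummy summation index on the right. I would make explicit that the expectation in Eq.~\ref{eq:future-outcomes} is implicitly conditioned on the trajectory passing through $(s_t, a_t)$, and then appeal to the Markov property and stationarity of the dynamics to re-index the discounted sum as beginning from $(s_0, a_0) = (s_t, a_t)$. Once this alignment is fixed, no deeper machinery is needed: the content of the proposition is really the observation that hindsight relabeling discounts future outcomes in precisely the way a Q-function discounts future rewards.
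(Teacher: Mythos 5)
Your proof is correct and follows essentially the same route the paper takes: the paper treats Proposition~\ref{prop:q} as immediate from substituting $r_e(s_t,a_t) \triangleq (1-\gamma)\,p(e_t = e \mid s_t)$ into the definition of the Q-value (Eq.~\ref{eq:Q}) and matching it against Eq.~\ref{eq:future-outcomes}, exactly as you do. Your remark about the overloaded time index and the implicit conditioning on $(s_t, a_t)$ in Eq.~\ref{eq:future-outcomes} is a fair and correctly resolved clarification, not a gap.
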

This result hints that OCBC methods are close to performing reward maximization. To clarify this connection, we use Bayes' rule to express the OCBC policy $\pi_O(a \mid s, e)$ in terms of this Q-function:
{\footnotesize \begin{align}
    \pi_O(a_t \mid s_t, e_{t+}) &= p^{\beta(\cdot \mid \cdot)}(a_t \mid s_t, e_{t+}) \nonumber \\
    & \propto p^{\beta(\cdot \mid \cdot)}(e_{t+} \mid s_t, a_t) \beta(a_t \mid s_t) = Q^{\beta(\cdot \mid \cdot)}(s_t, a_t, e=e_{t+}) \beta(a_t \mid s_t). \label{eq:bayes}
\end{align}}\!\!
This expression provides a simple explanation for what OCBC methods are doing -- they take the behavioral policy's action distribution and reweight it by the Q-values, increasing the likelihood of good actions and decreasing the likelihood of bad actions, as done in prior EM-based policy search methods~\citep{neumann2009fitted, peters2007reinforcement, peters2010relative}.
Such reweighting is guaranteed to perform policy improvement:
\begin{proposition}
Let $\pi(a \mid s, e)$ be the Bayes-optimal policy for applying OCBC to behavioral policy $\beta(a \mid s)$. Then $\pi(a \mid s, e)$ achieves higher returns than $\beta(a \mid s)$ under reward function $r_e(s_t, a_t)$:
{\footnotesize \begin{equation*}
\E_{\pi(\tau \mid e)}\bigg[ \sum_{t=0}^\infty \gamma^t r_e(s_t, a_t) \bigg] \ge \E_{\beta(\tau)}\bigg[ \sum_{t=0}^\infty \gamma^t r_e(s_t, a_t) \bigg].
\end{equation*}}\!\!
\end{proposition}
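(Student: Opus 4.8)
The plan is to recognize the statement as the classical policy improvement theorem applied to the Bayes-reweighted behavioral policy of Eq.~\ref{eq:bayes}. First I would make the normalization explicit. Using Proposition~\ref{prop:q} to identify the future-outcome density with the Q-function, the OCBC policy can be written as
\[
\pi(a \mid s, e) = \frac{Q^{\beta}(s, a, e)\, \beta(a \mid s)}{\sum_{a'} Q^{\beta}(s, a', e)\, \beta(a' \mid s)} = \frac{Q^{\beta}(s, a, e)\, \beta(a \mid s)}{V^{\beta}(s, e)},
\]
where I define $V^{\beta}(s, e) \triangleq \E_{\beta(a \mid s)}[Q^{\beta}(s, a, e)]$. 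The key observation is that the partition function is \emph{exactly} the value function $V^\beta(s,e)$, since averaging the Q-values of $\beta$ under $\beta$ itself returns $V^\beta$.

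Next I would establish the one-step improvement inequality $\E_{\pi(a \mid s, e)}[Q^\beta(s,a,e)] \ge V^\beta(s,e)$ at every state and task. Substituting the reweighted policy and rewriting as an expectation under $\beta$ gives
\[
\E_{\pi(a \mid s, e)}\big[Q^\beta(s,a,e)\big] = \frac{\E_{\beta(a \mid s)}\big[Q^\beta(s,a,e)^2\big]}{\E_{\beta(a \mid s)}\big[Q^\beta(s,a,e)\big]},
\]
so the desired inequality reduces to $\E_\beta[(Q^\beta)^2] \ge (\E_\beta[Q^\beta])^2$, which is just nonnegativity of the variance $\Var_{\beta(a \mid s)}(Q^\beta(s,a,e)) \ge 0$ (equivalently, Jensen's inequality for $x \mapsto x^2$). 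This holds pointwise in $s$ and $e$.

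Finally I would invoke the policy improvement theorem to promote this pointwise one-step guarantee into a guarantee on returns: since $\E_{\pi(a \mid s, e)}[Q^\beta(s,a,e)] \ge V^\beta(s,e)$ for all $s$, the standard telescoping argument (repeatedly unrolling $V^\beta(s,e) \le \E_\pi[r_e + \gamma V^\beta(s',e)] \le \cdots$) yields $V^\pi(s,e) \ge V^\beta(s,e)$ for every $s$. Taking the expectation over the initial-state distribution $p_0(s_0)$ then gives precisely the stated inequality on discounted returns.

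The main obstacle I anticipate is bookkeeping around the normalization rather than the inequality itself: I must verify that the partition function really is $V^\beta(s,e)$ and handle states with $V^\beta(s,e)=0$, where $Q^\beta(s,\cdot,e)\equiv 0$ by nonnegativity of $r_e$ so that $\pi$ is defined only up to this null set and the bound holds trivially. In continuous action spaces the sums become integrals, but the variance argument is unchanged. I would also flag the EM-policy-search reading: this reweighting is exactly the M-step of an EM view, which is why a single OCBC step improves even though, as the paper goes on to show, iterating OCBC need not improve because of the intervening averaging step.
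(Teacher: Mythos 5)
Your proof is correct and follows essentially the same route as the paper's: both identify the Bayes-optimal OCBC policy as the $Q^\beta$-reweighted behavioral policy and reduce the one-step improvement inequality to $\E_\beta[(Q^\beta)^2] \ge (\E_\beta[Q^\beta])^2$, i.e.\ Jensen's inequality applied to $x \mapsto x^2$. You are in fact slightly more complete than the appendix, which stops at the pointwise inequality $\E_{\tilde{\pi}}[Q^\beta] \ge \E_\beta[Q^\beta]$; you make explicit the telescoping step to the returns bound and handle the degenerate $V^\beta(s,e)=0$ case, both of which the paper leaves implicit.
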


\begin{figure}[t]
    \centering
    \vspace{-1em}
    \includegraphics[width=0.8\linewidth]{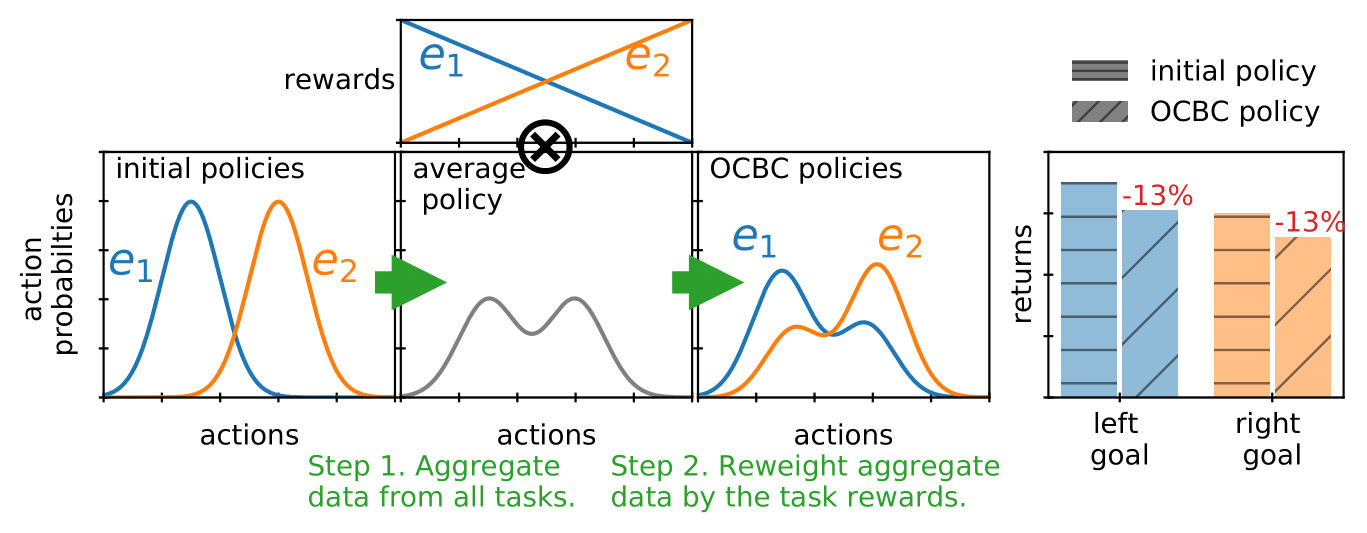}
    \vspace{-1em}
    \caption{\footnotesize \textbf{How does OCBC relate to reward maximization?} OCBC is equivalent to averaging the policies from different tasks \emph{(Step 1)} and then reweighting the action distribution by Q-values \emph{(Step 2)}. The reweighting step is policy improvement, but the averaging step can decrease performance. In this example, OCBC \emph{decreases} the task reward by $13\%$.}
    \label{fig:ocbc}
    \vspace{-1em}
\end{figure}
The proof is in Appendix~\ref{appendix:proofs}. 
This initial result is an apples-to-oranges comparison because it compares a task-conditioned policy $\pi(a \mid s, e)$ to a task-agnostic policy, $\beta(a \mid s)$. 

To fully understand whether OCBC corresponds to policy improvement, we have to compare successive iterates of OCBC. To do this, assume that data is collected from a task-conditioned behavioral policy $\beta(a \mid s, e)$, where tasks are sampled from the prior $e \sim p_e(e)$.\footnote{
The state-action distribution $p^\beta(s, a)$ and future outcome distribution $p^\beta(e_{t+} \mid s, a)$ for this \emph{mixture} of policies is equivalent to those for a Markovian policy, $\beta(a \mid s) = \int \beta(a \mid s, e) p^\beta(e \mid s) de$~\citep[Thm. 2.8]{ziebart2010modeling}.
}
We can now rewrite the Bayes-optimal OCBC policy (Eq.~\ref{eq:bayes}) as the output of a two-step procedure: average together the policies for different tasks, and then perform policy improvement:
{\footnotesize \begin{equation*}
    \underbrace{\beta(a \mid s) \gets \int \beta(a \mid s, e) p^\beta(e \mid s) d e}_\text{averaging step} \qquad \longleftrightarrow \qquad  \underbrace{\pi(a \mid s, e) \gets \frac{Q^{\beta(\cdot \mid \cdot)}(s, a, e)}{p^\beta(e \mid s)} \beta(a \mid s)}_\text{improvement step}
\end{equation*}}\!\!
The distribution $p^\beta(e \mid s)$ is the distribution over \emph{commanded} tasks, given that the agent is in state $s_t$.

Intuitively, the averaging step looks at all the the actions that were taken at state $s_t$, regardless of which task the agent was trying to solve when it took that action. Then, the improvement step reweights those actions, so that actions with high Q-values receive a higher probability and actions with low Q-values receive a lower probability. While it makes sense that the improvement step should yield a better policy, it is less clear whether the averaging step makes the policy better or worse.

\subsection{OCBC can fail to maximize rewards}

We now use this two-step decomposition to show that OCBC can fail to maximize rewards. While prior work has also shown that OCBC can fail~\citep{paster2020planning}, our analysis helps to explain why: for certain problems, the averaging step can make the policy much worse. We start by illustrating the averaging and improvement steps with a simple example, and then provide some formal statements using this example as a proof by counterexample.

While priro work has already presented a simple counterexample for OCBC~\citep{paster2020planning}, we present a different counterexample to better visualize the averaging and improvement steps.
We will use a simple bandit problem with one state, a one-dimensional action distribution, and two tasks. We visualize this setting in Fig.~\ref{fig:ocbc}. The two tasks have equal probability. The averaging step takes the initial task-conditioned policies \emph{(left)} and combines them into a single policy \emph{(center-bottom)}. Since there is only one state, the weights $p(e \mid s)$ are equal for both tasks. The improvement step reweights the average distribution by the task rewards \emph{(center-top)}, resulting in the final task-conditioned policies learned by OCBC \emph{(right)}. In this example, the final task-conditioned policies achieve returns that are about 13\% \emph{worse} than the initial policies.

Formally, this counterexample provides us with a proof of three negative results:
\begin{proposition} \label{prop:1}
There exists an environment, task distribution, and task $e$ where the OCBC policy, $\pi(a \mid s, e)$, does not achieve the highest rewards:
{\footnotesize \begin{equation*}
    \E_{\pi(\tau \mid e)}\left[ \sum_{t=0}^\infty \gamma^t r_e(s_t, a_t) \right] < \max_{\pi^*} \E_{\pi^*(\tau \mid e)}\left[ \sum_{t=0}^\infty \gamma^t r_e(s_t, a_t) \right].
\end{equation*}}\!\!
\end{proposition}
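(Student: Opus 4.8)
The plan is to prove this negative result by exhibiting the single counterexample sketched in Fig.~\ref{fig:ocbc} and computing the two relevant returns explicitly; since the proposition only asserts suboptimality, it suffices to pin down one task on which the OCBC policy strictly underperforms the best policy. I would work directly with the two-step decomposition of Sec.~\ref{sec:two-step}, which already isolates the mechanism: the averaging step collapses the task-conditioned behavioral policies into one action distribution $\beta(a \mid s)$, and the improvement step reweights it by the Q-values. The key structural simplification in a one-state (bandit) problem is that the commanded-task distribution $p^\beta(e \mid s)$ depends only on the state, which is fixed; hence it is \emph{constant in $a$} and enters the improvement step only through the normalization. The OCBC policy therefore reduces to $\pi(a \mid s_0, e) \propto Q^{\beta(\cdot\mid\cdot)}(s_0, a, e)\,\beta(a \mid s_0)$, with $\beta(a\mid s_0)$ the averaged policy.

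Concretely, I would instantiate a one-step goal-reaching problem with a single decision state $s_0$, two equally likely outcomes $\gE = \{1,2\}$ (so $p_e(e) = p^\beta(e\mid s_0) = 1/2$), and a one-dimensional action $a \in [0,1]$ whose effect on the dynamics is to reach outcome $1$ with probability $a$ and outcome $2$ with probability $1-a$. By Prop.~\ref{prop:q}, the Q-value is then exactly this outcome probability, $Q^{\beta(\cdot\mid\cdot)}(s_0, a, e=1) = a$. I would choose task-conditioned behavioral policies $\beta(a\mid e=1)$ and $\beta(a\mid e=2)$ that each favor their own goal but retain mass in the interior of $[0,1]$, so that the averaged policy $\beta(a) = \tfrac12\big(\beta(a\mid e=1) + \beta(a\mid e=2)\big)$ is supported on a set of positive measure inside $(0,1)$. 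Substituting into the improvement step yields the closed form $\pi(a\mid s_0, e=1) \propto a\,\beta(a)$.

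It then remains to compare returns on task $e=1$. Up to a common positive discount factor, the return of a policy on task $e$ is proportional to its probability of steering to outcome $e$; for the OCBC policy this probability is
\begin{equation*}
\E_{\pi(\cdot\mid1)}[a] = \frac{\int_0^1 a^2\,\beta(a)\,da}{\int_0^1 a\,\beta(a)\,da} = \frac{\E_\beta[a^2]}{\E_\beta[a]},
\end{equation*}
while the optimal policy deterministically plays $a=1$ and steers to outcome $1$ with probability $1$. Since $a \le 1$ gives $a^2 \le a$ pointwise, with strict inequality on $(0,1)$, we have $\E_\beta[a^2] < \E_\beta[a]$ whenever $\beta$ places positive mass on $(0,1)$; hence the ratio is strictly below $1$ and the proposition's strict inequality follows.

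The main obstacle is not the final arithmetic but setting up the example so that it is both valid within the framework and genuinely strict. I would verify that the induced reward $r_e(s_t) = (1-\gamma)p(e_t = e\mid s_t)$ lies in $[0,1]$ and, crucially, that the action truly influences the attained outcome: a literal single-state, single-reward bandit would make every policy optimal, so the counterexample must let $a$ steer the terminal outcome. I would also confirm that the averaging step does not accidentally preserve optimality: the reweighting by $a$ cannot concentrate $\pi(\cdot\mid1)$ at $a=1$ unless $\beta$ already does, which is exactly the interior-support condition guaranteeing the gap is strict rather than merely weak.
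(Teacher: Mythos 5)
Your proposal is correct and follows essentially the same route as the paper: a proof by counterexample on a one-state bandit with two equally likely tasks and a one-dimensional action, using the averaging-then-reweighting decomposition of Sec.~\ref{sec:two-step} (the paper's own argument is exactly the numerical instance depicted in Fig.~\ref{fig:ocbc}). Your version simply makes the computation analytic — showing the OCBC policy attains success probability \(\E_\beta[a^2]/\E_\beta[a] < 1\) whenever \(\beta\) has interior support — which is a clean, explicit instantiation of the same counterexample.
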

\begin{proposition} \label{prop:2}
There exists an environment, policy $\beta(a \mid s, e)$, and task $e$ such that the policy produced by OCBC, $\pi(a \mid s, e)$, achieves lower returns than the behavior policy:
{\footnotesize\begin{equation*}
    \E_{\pi(\tau \mid e)}\left[ \sum_{t=0}^\infty \gamma^t r_e(s_t, a_t) \right] < \E_{\beta(\tau \mid e)}\left[ \sum_{t=0}^\infty \gamma^t r_e(s_t, a_t) \right].
\end{equation*}}\!\!
\end{proposition}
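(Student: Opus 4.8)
The plan is to prove Proposition~\ref{prop:2} by counterexample, reusing the single-state bandit already introduced for Proposition~\ref{prop:1} together with the two-step decomposition of Section~\ref{sec:two-step}. I would take one decision state, a one-dimensional action $a$ supported on $\{0,1\}$, and two equiprobable tasks $e\in\{0,1\}$. The outcome model is chosen so that the induced rewards are valid in the sense required by Proposition~\ref{prop:q}: taking action $a$ leads to a state at which outcome $1$ occurs with probability $a$, giving $r_1(a)=(1-\gamma)a$ and $r_0(a)=(1-\gamma)(1-a)$; these sum to $1-\gamma$, so they arise from a genuine outcome distribution. The optimal action is then $a=1$ for task $1$ and $a=0$ for task $0$, and I would let each task-conditioned behavioral policy be optimal for its own task, $\beta(a\mid s,e{=}1)=\mathbbm{1}[a=1]$ and $\beta(a\mid s,e{=}0)=\mathbbm{1}[a=0]$, so that the behavioral return on the right-hand side of the claim is as large as possible.

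The remaining steps are mechanical. First, form the averaged policy from the averaging step of Section~\ref{sec:two-step}, $\beta(a\mid s)=\tfrac12\beta(a\mid s,e{=}1)+\tfrac12\beta(a\mid s,e{=}0)$, which now places half its mass on the action that is \emph{bad} for each task. Second, apply the closed form in \eqref{eq:bayes}, $\pi(a\mid s,e)\propto Q^{\beta(\cdot\mid\cdot)}(s,a,e)\,\beta(a\mid s)$; since there is a single state, $p^\beta(e\mid s)=\tfrac12$ is constant and cancels in the normalization. Third, evaluate $Q^{\beta}$ using Proposition~\ref{prop:q}: in this self-looping bandit $Q^{\beta(\cdot\mid\cdot)}(s,a,e)=r_e(a)+\tfrac{\gamma}{1-\gamma}\E_{\beta(a\mid s)}[r_e]$, i.e. the one-step reward plus an action-independent offset. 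Fourth, substitute to obtain $\pi(\cdot\mid e{=}1)$ in closed form, namely $\pi(a{=}1\mid s,e{=}1)=1-\tfrac{\gamma}{2}$ and $\pi(a{=}0\mid s,e{=}1)=\tfrac{\gamma}{2}$, and compare values $V(e{=}1)=\E[r_1]/(1-\gamma)$. This yields $V^{\pi}(e{=}1)=1-\tfrac{\gamma}{2}<1=V^{\beta}(e{=}1)$, which is exactly the strict inequality claimed; smoothing the two point masses into narrow continuous densities reproduces the qualitative picture of Figure~\ref{fig:ocbc}.

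I expect the crux to be arguing that the improvement step cannot \emph{undo} the degradation caused by averaging. The preceding policy-improvement result only guarantees that $\pi(\cdot\mid e)$ beats the \emph{averaged} policy $\beta(a\mid s)$, not the task-conditioned $\beta(a\mid s,e)$; so the example must be engineered so that averaging hurts by more than reweighting can repair. What makes this possible is precisely the action-independent offset $\tfrac{\gamma}{1-\gamma}\E_{\beta}[r_e]$ in $Q^{\beta}$: because every action keeps a strictly positive Q-value, the reweighting in \eqref{eq:bayes} can never fully remove the task-$e$-suboptimal mass that averaging injected, forcing $\pi(\cdot\mid e{=}1)$ to retain weight $\tfrac{\gamma}{2}>0$ on $a=0$. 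Turning this intuition into the displayed strict inequality is the only place requiring care; the surrounding algebra is routine.
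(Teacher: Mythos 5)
Your proposal is correct and follows essentially the same route as the paper: a proof by counterexample on a single-decision-state, two-task bandit, analyzed via the averaging-then-reweighting decomposition of Section~\ref{sec:two-step}, with the averaged policy's strictly positive Q-values on task-suboptimal actions preventing the improvement step from undoing the damage. The only difference is cosmetic — you use two discrete actions and get the clean closed form $1-\gamma/2<1$, whereas the paper's Figure~\ref{fig:ocbc} uses a continuous action distribution and reports the degradation numerically ($\approx 13\%$) — and your version is, if anything, the more explicit of the two.
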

Not only can OCBC fail to find the optimal policy, it can yield a policy that is worse than the initial behavior policy. Moreover, iterating OCBC can yield policies that get worse and worse:
\begin{proposition} \label{prop:3}
There exists an environment, an initial policy $\pi_0(a \mid s, e)$, and task $e$ such that iteratively applying OCBC, $\pi_{t+1} \gets \argmax_\pi \gF(\pi; \beta = \pi_t)$, results in \emph{decreasing} the probability of achieving desired outcomes:
{\footnotesize \begin{equation*}
    \E_{\pi_{t+1}(\tau \mid e)}\left[ \sum_{t=0}^\infty \gamma^t r_e(s_t, a_t) \right] < \E_{\pi_t(\tau \mid e)}\left[ \sum_{t=0}^\infty \gamma^t r_e(s_t, a_t) \right] \quad \text{for all } t = 0, 1, \cdots.
\end{equation*}}\!\!
\end{proposition}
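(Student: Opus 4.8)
The plan is to prove Proposition~\ref{prop:3} by a symmetric counterexample in which the OCBC iteration collapses to a one-dimensional, strictly monotone recursion. Concretely, I would take the two-task bandit sketched in Fig.~\ref{fig:ocbc} and realize it as a small deterministic MDP so that the reward keeps the required form $r_e(s_t,a_t)=(1-\gamma)p(e_t=e\mid s_t)$. Let there be a start state $s_0$, three actions $a\in\{-1,0,+1\}$ that deterministically lead to outcome states $g_{-1},g_0,g_{+1}$, and an absorbing state entered thereafter. I choose the outcome probabilities symmetrically: $p(e{=}1\mid g_{+1})=1$, $p(e{=}1\mid g_0)=c$, $p(e{=}1\mid g_{-1})=0$, with the mirror assignment for task $2$, equal task priors, and a fixed constant $c\in(\tfrac12,1)$. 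Because transitions out of $s_0$ are deterministic, Proposition~\ref{prop:q} gives $Q^{\beta}(s_0,a,e)\propto p(e\mid g_a)$ \emph{independently} of the behavioral policy; this removes the usual dependence of the Q-function on the current iterate and cleanly isolates the effect of the averaging step.

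I would then initialize with mirror-symmetric task policies, $\pi_0(a\mid s_0,2)=\pi_0(-a\mid s_0,1)$, each concentrated near its task's optimum. Using Eq.~\ref{eq:bayes} and the two-step decomposition, one OCBC step is: average the two task policies with the commanded-task posterior (since all tasks pass through the single decision point $s_0$, this posterior equals the uniform prior $\tfrac12$), then reweight by $Q^\beta\propto p(e\mid g_a)$. The key structural facts are that mirror symmetry is preserved by this map, and that task~$1$ can never place mass on action $-1$, since the weight $p(1\mid g_{-1})=0$ annihilates it in the reweighting. Hence every iterate lies in a one-parameter family indexed by the scalar $p_t\triangleq\pi_t({+}1\mid s_0,1)$, and carrying out the averaging and reweighting steps explicitly yields the recursion
\[
    p_{t+1} = \frac{p_t}{p_t + 2c\,(1-p_t)}.
\]

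It then remains to show this recursion forces a strict decrease at \emph{every} step. I would check that for $p\in(0,1)$ the map $f(p)=p/\big(p+2c(1-p)\big)$ satisfies $f(p)<p \iff 1<2c$, so that $c>\tfrac12$ gives $p_{t+1}<p_t$; that $f$ maps $(0,1)$ into itself, so the iterates never leave the open interval and never hit a fixed point there; and that the task-$1$ return is the increasing affine function $J_t\propto c+(1-c)p_t$ of $p_t$ (with an iterate-independent positive proportionality constant). Chaining these gives $J_{t+1}<J_t$ for all $t$, which is the claim, and by symmetry the identical conclusion holds for task~$2$.

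The main obstacle is not any single inequality but establishing the decrease \emph{uniformly in $t$}: I must rule out the iteration stalling at an interior fixed point or escaping the feasible family, which is precisely why exhibiting a monotone, fixed-point-free scalar map $f$ on an invariant interval is the heart of the argument. A secondary, bookkeeping obstacle is confirming that the deterministic-transition realization genuinely produces action-dependent rewards of the prescribed form $r_e=(1-\gamma)p(e_t=e\mid s_t)$ with $r_e\in[0,1]$, while keeping $Q^\beta$ independent of the iterate; once the symmetry reduction is in place, both the reward-range check and the derivation of the recursion are routine.
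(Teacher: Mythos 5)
Your overall strategy---realizing the bandit counterexample as an explicit MDP and collapsing the OCBC iteration to a monotone scalar recursion---is the same route the paper takes: the paper offers no separate appendix argument for this proposition, relying instead on the counterexample of Fig.~\ref{fig:ocbc} and its iterated version in Fig.~\ref{fig:triangle}, so your attempt to make the recursion explicit is welcome. Your derivation of $p_{t+1}=p_t/\bigl(p_t+2c(1-p_t)\bigr)$ from Eq.~\ref{eq:bayes} is also correct \emph{for the construction you describe}. The problem is that the construction is not a valid instance of the OCBC framework. The labels $p(e_t\mid s_t)$ must form a probability distribution over outcomes at each state, and your mirror-symmetric assignment gives the shared state $g_0$ the weights $p(e{=}1\mid g_0)=p(e{=}2\mid g_0)=c$, so $\sum_e p(e\mid g_0)=2c>1$ exactly in the regime $c>\tfrac12$ that you need for $f(p)<p$. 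Imposing the normalization forces $c\le\tfrac12$, and then $f(p)\ge p$: OCBC weakly \emph{improves}. This is not a bookkeeping issue you can patch within the symmetric family: with $k$ tasks symmetrically sharing a middle action, averaging dilutes each private action by $1/k$ while normalization caps the shared weight at $c\le 1/k$, and these two effects cancel identically, so no fully symmetric construction can ever exhibit strict decrease.

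The repair is to break the symmetry, which is what the paper's three-task example in Fig.~\ref{fig:triangle} implicitly does: the strict decrease must come not from large reward weights at the contested action (those are capped by normalization) but from another task's \emph{policy} concentrating its mass there. Concretely, take three actions and three tasks with $p(e_1\mid a_1)=0.7$, $p(e_3\mid a_1)=0.3$, $p(e_3\mid a_3)=0.4$, $p(e_2\mid a_2)=1$, all other weights zero (every column sums to at most one), and initialize $\pi_0(a_1\mid e_1)=\pi_0(a_2\mid e_2)=1$. These two policies are fixed by the OCBC map, task $e_1$ floods the averaged policy with $a_1$, and setting $p_t=\pi_t(a_3\mid e_3)$ yields $p_{t+1}=0.4p_t/(0.6+0.1p_t)$, which maps $(0,1]$ into itself, has its only nonnegative fixed point at $0$, and satisfies $p_{t+1}<p_t$ throughout; since the task-$e_3$ return is proportional to $0.3+0.1p_t$, the returns strictly decrease at every iteration. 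With that substitution the rest of your argument (invariance of the family, monotonicity, affine dependence of the return on $p_t$) goes through as you outlined.
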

While prior work has claimed that OCBC is an effective way to learn from suboptimal data~\citep{ding2019goal, ghosh2020learning}, this result suggests that this claim is not always true. Moreover, while prior work has argued that \emph{iterating} OCBC is necessary for good performance~\citep{ghosh2020learning}, our analysis suggests that such iteration can actually lead to \emph{worse} performance than doing no iteration.

In this example, the reason OCBC made the policies worse was because of the averaging step.
In general, the averaging step may make the policies better or worse. If all the tasks are similar (e.g., correspond to taking similar actions), then the averaging step can decrease the variance in the estimates of the optimal actions. If the tasks visit disjoint sets of states (e.g., the trajectories do not overlap), then the averaging step will have no effect because the averaging coefficients $p^\beta(e \mid s)$ are zero for all but one task. If the tasks have different optimal actions but do visit similar states, then the averaging step can cause performance to decrease. This case is more likely to occur in settings with wide initial state distributions, wide task distributions, and stochastic dynamics. In Sec.~\ref{sec:method}, we will use this analysis to propose a variant of OCBC that does guarantee policy improvement.

\subsection{Relationship with EM policy search}
\label{sec:em}

While OCBC is conventionally presented as a way to solve control problems without explicit RL~\citep{ding2019goal, ghosh2020learning}, we can use the analysis in the preceding sections to show that OCBC has a close connection with prior RL algorithms based on expectation maximization~\citep{dayan1997using, toussaint2006probabilistic, peters2007reinforcement,neumann2009fitted, toussaint2009robot, kober2009learning, peters2010relative, neumann2011variational, levine2013variational}. We refer to these methods collectively as \emph{EM policy search}. These methods typically perform some sort of reward-weighted behavioral cloning~\citep{dayan1997using, peters2007reinforcement}, with each iteration corresponding to a policy update of the following form:
\begin{equation*}
    \max_\pi \E_{p^\beta(s_t, a_t)}\left[f(Q^{\beta(\cdot \mid \cdot)}(s_t, a_t)) \log \pi(a_t \mid s_t) \right],
\end{equation*}
where $f(\cdot)$ is a positive transformation. While most prior work uses an exponential transformation \mbox{($f(Q) = e^Q$)}~\citep{toussaint2006probabilistic, neumann2009fitted, levine2013variational}, we will use a linear transformation \mbox{($f(Q) = Q$)}~\citep{dayan1997using, peters2007reinforcement} to draw a precise connection with OCBC.
Prior methods use a variety of techniques to estimate the Q-values, with Monte-Carlo estimates being a common choice~\citep{peters2007reinforcement}.

\paragraph{OCBC is EM policy search.}
We now show that each step of EM policy search is equivalent to each step of OCBC, for a particular set of tasks $\gE$ and a particular distribution over tasks $p_e(e)$. To reduce EM policy search to OCBC, will use two tasks: task $e_1$ corresponds to reward maximization while task $e_0$ corresponds to reward minimization. Given a reward function $r(s) \in (0, 1)$, we annotate each state with a task by sampling
{\footnotesize \begin{equation*}
    p(e \mid s) = \begin{cases}
    e_1 & \text{with probability }r(s) \\
    e_0 & \text{with probability }(1 - r(s)) \\
    \end{cases}.
\end{equation*}}\!\!
With this construction, the relabeling distribution is equal to the Q-function for reward function $r(s)$: $p^\beta(e_{t+} = e_1 \mid s_t, a_t) = Q^{\beta(\cdot \mid \cdot)}(s_t, a_t)$. We assume that data collection is performed by deterministically sampling the reward maximization task, $e_1$. Under this choice of tasks, the objective for OCBC (Eq.~\ref{eq:ocbc-obj}) is exactly equivalent to the objective for reward weighted regression~\citep{peters2007reinforcement}:
{\footnotesize \begin{align*}
    \gF(\pi; \beta)
    &= \E_{p^\beta(s_t, a_t)p^{\beta(\cdot \mid \cdot)}(e_{t+} \mid s_t, a_t)} \left[\log \pi(a_t \mid s_t, e_{t+}) \right] \\
    &= \E_{p^\beta(s_t, a_t)}\left[p^{\beta(\cdot \mid \cdot)}(e_{t+} = e_1 \mid s_t, a_t) \log \pi(a_t \mid s_t, e_1) + p^{\beta(\cdot \mid \cdot)}(e_{t+} = e_0 \mid s_t, a_t) \log \pi(a_t \mid s_t, e_0)\right] \\
    &= \E_{p^\beta(s_t, a_t)}\left[Q^{\beta(\cdot \mid \cdot)}(s_t, a_t) \log \pi(a_t \mid s_t, e_1) + \cancel{(1 - Q^{\beta(\cdot \mid \cdot)}(s_t, a_t)) \log \pi(a_t \mid s_t, e_0)}\right].
\end{align*}}\!\!
On the last line, we have treated the loss for the reward minimization policy (task $e_0$) as a constant, as EM policy search methods do not learn this policy.

While OCBC can fail to optimize this objective (Sec.~\ref{sec:two-step}), EM-based policy search methods are guaranteed to optimize expected returns~(\citet{dayan1997using}, \citet[Lemma 1]{toussaint2010expectation}). This apparent inconsistency is resolved by noting how data are collected. Recall from Sec.~\ref{sec:two-step} and Fig.~\ref{fig:ocbc} that OCBC is equivalent to an averaging step followed by a policy improvement step. The averaging step emerges because we collect experience for each of the tasks, and then aggregate the experience together. Therein lies the difference: EM policy search methods only collect experience when commanding one task, $\pi(a \mid s, e=e_1)$. Conditioning on a single outcome ($e_1$) during data collection removes the averaging step, leaving just the policy improvement step.

We can apply a similar trick to any OCBC problem: if there is a single outcome that the user cares about ($e^*$), alternate between optimizing the OCBC objective and collecting new experience conditioned on outcome $e^*$. If all updates are computed exactly, such a procedure is guaranteed to yield the optimal policy for outcome $e^*$, maximizing the reward function $r(s) = p(e = e^* \mid s)$.

This trick only works if the user has a single desired outcome. What if the user wants to learn optimal policies for \emph{multiple tasks}? The following section shows how OCBC can be modified to guarantee policy improvement for learning multiple tasks.

\subsection{Should you relabel with failed outcomes?}

While our results show that OCBC may or may not yield a better policy, they also suggest that the choice of tasks $\gE$ and the distribution of commanded tasks $p_e(e)$ affect the performance of OCBC. In Appendix~\ref{appendix:fail}, we discuss whether every state should be labeled as a success for some task, or whether some tasks should be treated as a failure for all tasks. The main result relates this decision to a bias-variance tradeoff: labeling every state as a success for some task decreases variance but potentially incurs bias. Our experiment shows that relabeling some tasks as failures is harmful in the low-data setting, but helpful in settings with large amounts of data.

\section{Fixing OCBC}
\label{sec:method}

OCBC is not guaranteed to produce optimal (reward-maximizing) policies because of the averaging step. To fix OCBC, we propose a minor modification that ``undoes'' the averaging step, similar to prior work~\citep{paster2020planning}. To provide a provably-convergent procedure, we will also modify the relabeling procedure, though we find that this modification is unnecessary in practice.

\subsection{Normalized OCBC}

OCBC can fail to maximize rewards because of the averaging step, so we propose a variant of OCBC (\emph{normalized OCBC}) that effectively removes this averaging step, leaving just the improvement step: $\pi(a \mid s, e) \propto Q^{\beta(\cdot \mid \cdot)}(s, a, e) \beta(a \mid s, e)$. To achieve this, normalized OCBC will modify OCBC so that it learns two policies: it learns the task-conditioned policy using standard OCBC (Eq.~\ref{eq:ocbc-obj}) and additionally learns the average behavioral policy $\pi_N(a \mid s)$ using (unconditional) behavioral cloning. Whereas standard OCBC simply replaces the behavioral policy with the $\pi(a \mid s, e)$, normalized OCBC will \emph{reweight} the behavioral policy by the ratio of these two policies:
{\footnotesize \begin{align*}
    \tilde{\pi}(a \mid s, e) \gets \frac{\pi(a \mid s, e) }{\pi_N(a \mid s)} \beta(a \mid s, e)
\end{align*}}\!\!

\begin{figure}[t]
\vspace{-2em}
\begin{algorithm}[H]
    \caption{\footnotesize Normalized OCBC}\label{alg:normalized-ocbc}
    \begin{algorithmic} \footnotesize 
        \Function{RatioPolicy}{$s, e; \beta(\cdot \mid \cdot, \cdot), \pi(\cdot \mid \cdot, \cdot), \pi_N(\cdot \mid \cdot)$}
            \State $a^{(1)}, a^{(2)}, \cdots \sim \beta(a \mid s, e)$
            \State $q^{(i)} \gets \frac{\pi(a^{(i)} \mid s, e)}{\pi_N(a^{(i)} \mid s)} \quad \text{for } i = 1, 2, \cdots$.
            \State $i \sim $ \textsc{Categorical}$\left(\frac{q^{(1)}}{\sum_i q^{(i)}}, \frac{q^{(2)}}{\sum_i q^{(i)}}, \cdots \right)$
            \State \textbf{return} $a^{(i)}$
        \EndFunction
        \Function{PolicyImprovement}{$\beta(a \mid s, e), \epsilon$}
        \State $\pi_N(a \mid  s) \!\gets\! \argmax_{\pi_N} \E_{\vspace{-0.2em}\substack{e \sim p_e(e), \tau \sim \beta(\tau \mid e)\\ (s_t, a_t) \sim \tau}}\vspace{-0.5em}\left[\log \pi_N(a_t \mid s_t) \right]$
        \State {\tiny $$$$}
        \State $\pi(a \mid s, e) \!\gets\! \argmax_\pi \E_{\substack{e \sim p_e(e), \tau \sim \beta(\tau \mid e)\\ (s_t, a_t) \sim \tau, k \sim \text{Geom}(1-\gamma)}}\left[\left(\left|\prod_{t'=t}^{t+k} \frac{\beta(a_{t'} \mid s_{t'}, e_{t+k})}{\beta(a_{t'} \mid s_{t'}, e)} - 1 \right| \le \epsilon \right) \log \pi(a_t \mid s_t, e_{t+k}) \right]$
        \State {\tiny $$$$}
        \State \textbf{return}
        \Call{RatioPolicy}{$s_t, e_{t+}; \beta(\cdot \mid \cdot, \cdot), \pi(\cdot \mid \cdot, \cdot), \pi_N(\cdot \mid \cdot)$}
        \EndFunction 
    \end{algorithmic}
\end{algorithm}
\vspace{-1em}
\end{figure}

The intuition here is that the ratio of these two policies represents a Q-function: if $\pi(a \mid s, e)$ and $\pi_N(a \mid s)$ are learned perfectly (i.e., Bayes' optimal), rearranging Eq.~\ref{eq:bayes} shows that
{\footnotesize \begin{equation*}
    \frac{\pi(a \mid s, e_{t+})}{\pi_N(a \mid s)} \propto Q^{\beta(\cdot \mid \cdot)}(s, a, e = e_{t+})
\end{equation*}}\!\!
Thus, normalized OCBC can be interpreted as reweighting the policy by the Q-values, akin to EM policy search methods, but without learning an explicit Q-function. This approach is similar to GLAMOR~\citep{paster2020planning}, a prior OCBC method that also normalizes the policy by the behavioral policy, but focuses on planning over open-loop action sequences. However, because our analysis relates Q-values to reward maximization, we identify that normalization alone is not enough to prove convergence.

To obtain a provably convergent algorithm, we need one more modification. When relating OCBC to Q-values (Eq.~\ref{eq:bayes}), the Q-values reflected the probability of solving task $e$ using \emph{any} policy; that is, they were $Q^{\beta(\cdot \mid \cdot)}(s, a)$, rather than $Q^{\beta(\cdot \mid \cdot, e)}(s, a, e)$. Intuitively, the problem here is that we are using experience collected for one task to learn to solve another task. A similar issue arises when policy gradient methods are used in the off-policy setting (e.g.,~\cite{schulman2017proximal}). While we can correct for this issue using importance weights of the form $\frac{\beta(\tau \mid e)}{\beta(\tau \mid e')}$, such importance weights typically have high variance. Instead, we modify the relabeling procedure. Let us say that the policy for task $e$ collects trajectory $\tau$ which solves task $e'$. We will only use this trajectory as a training example of task $e'$ if the importance weights are sufficiently close to one: $1 - \epsilon \le \frac{\beta(\tau \mid e')}{\beta(\tau \mid e)} \le 1 + \epsilon$. If the importance weights are too big or too small, then we discard this trajectory. 

The error $\epsilon$ represents a bias-variance tradeoff. When $\epsilon$ is small, we only use experience from one task to solve another task if the corresponding policies are very similar. This limits the amount of data available for training (incurring variance), but minimizes the approximation error. On the other hand, a large value of $\epsilon$ means that experience is shared freely among the tasks, but potentially means that normalized OCBC will converge to a suboptimal policy. Normalized OCBC performs approximate policy iteration, with an approximation term that depends on $\epsilon$ (proof in Appendix~\ref{appendix:proofs}):
\begin{lemma} \label{lemma:pi}
Assume that states, actions, and tasks are finite. Assume that normalized OCBC obtains the Bayes' optimal policies at each iteration. Normalized OCBC converges to a near-optimal policy:
\begin{equation*}
        \limsup_{k \rightarrow \infty} \left| \E_{s_0 \sim p_0(s_0)}\left[V^{\pi^*(\cdot \mid \cdot, e)}\right] - \E_{s_0 \sim p_0(s_0)}\left[V^{\pi_k(\cdot \mid \cdot, e)} \right] \right| \le \frac{2\gamma}{(1 - \gamma)^2}\epsilon.
\end{equation*}
\end{lemma}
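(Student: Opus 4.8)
The plan is to recognize normalized OCBC as an instance of approximate policy iteration and then apply the standard error-propagation bound, with the per-iteration approximation error controlled by the relabeling threshold $\epsilon$.

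First I would pin down the update that normalized OCBC actually performs under the Bayes-optimality assumption. Rearranging Eq.~\ref{eq:bayes} gives $\pi(a \mid s, e)/\pi_N(a \mid s) \propto Q^{\beta(\cdot \mid \cdot)}(s, a, e)$, so the \textsc{RatioPolicy} output is exactly a reweighting of the task-conditioned behavior policy by these Q-values, $\pi_{k+1}(a \mid s, e) \propto \widehat{Q}_k(s, a, e)\,\pi_k(a \mid s, e)$. By Proposition~\ref{prop:q} together with the policy-improvement property invoked for EM policy search, when the Q-values are exact this reweighting is a bona fide improvement step, so in the limit $\epsilon \to 0$ the procedure is exact policy iteration and converges to $\pi^*$. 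The entire content of the lemma is therefore to bound the error introduced once $\epsilon > 0$.

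The heart of the argument—and the step I expect to be the main obstacle—is to show that the filtered relabeling makes the recovered Q-values $\widehat{Q}_k$ uniformly close to the \emph{on-task} Q-values $Q^{\pi_k(\cdot \mid \cdot, e)}(s,a,e)$ rather than the mixture Q-values $Q^{\beta(\cdot \mid \cdot)}$. Here I would use that the indicator in \textsc{PolicyImprovement} retains a relabeled segment only when the trajectory importance weight satisfies $1 - \epsilon \le \beta(\tau \mid e')/\beta(\tau \mid e) \le 1 + \epsilon$. A change of measure then converts an expectation under the commanded-task distribution into an expectation under the relabeled-task distribution up to a multiplicative factor in $[1-\epsilon, 1+\epsilon]$. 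Because the reward $r_e(s_t,a_t) = (1-\gamma)p(e_t = e \mid s_t)$ lies in $[0, 1-\gamma]$, every value and Q-value lies in $[0,1]$, so this multiplicative $(1\pm\epsilon)$ distortion translates into an \emph{additive} bound $\|\widehat{Q}_k - Q^{\pi_k(\cdot \mid \cdot, e)}\|_\infty \le \epsilon$ after summing the discounted series. The care needed here is in handling the geometric-length window $k \sim \mathrm{Geom}(1-\gamma)$ used in the algorithm and in confirming that the filter biases, rather than merely reweights, the estimate by no more than $\epsilon$.

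Finally I would invoke the classical approximate-policy-iteration result: if each iterate improves with respect to a value estimate whose error is at most $\delta$, then $\limsup_{k}\|V^{\pi^*(\cdot \mid \cdot, e)} - V^{\pi_k(\cdot \mid \cdot, e)}\|_\infty \le 2\gamma\delta/(1-\gamma)^2$. Substituting $\delta = \epsilon$ from the previous step and taking the expectation over $s_0 \sim p_0(s_0)$ yields the claimed bound; finiteness of states, actions, and tasks is what guarantees the sup-norm and the improvement operator are well defined throughout. The one remaining subtlety is reconciling the soft (Q-weighted) improvement actually performed by \textsc{RatioPolicy} with the greedy improvement of the textbook bound, which I would address either by carrying an additional improvement-error term that is absorbed into the same $1/(1-\gamma)^2$ factor, or by appealing directly to the cited soft-policy-iteration convergence guarantee, whose fixed points coincide with the greedy-optimal policy.
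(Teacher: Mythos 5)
Your proposal follows essentially the same route as the paper's proof: the first step uses the importance-weight filter together with the boundedness of returns ($R_e(\tau) \in [0,1]$, since $r_e \in [0,1-\gamma]$) to show the Q-values recovered from relabeled/mixture data are within an additive $\epsilon$ of the task-conditioned Q-values, and the second step invokes the classical approximate-policy-iteration bound of Bertsekas and Tsitsiklis to obtain the $\frac{2\gamma}{(1-\gamma)^2}\epsilon$ factor. The one place you are more careful than the paper is in flagging the mismatch between the soft, Q-weighted improvement actually performed by the ratio policy and the greedy improvement assumed by the textbook bound --- the paper applies that bound without commenting on this point.
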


Implementing normalized OCBC is easy, simply involving an additional behavioral cloning objective. Sampling from the ratio policy is also straightforward to do via importance sampling. As shown in Alg.~\ref{alg:normalized-ocbc}, we can first sample many actions from the behavioral policy and estimate their corresponding Q-values using the policy ratio. Then, we select one among these many actions by sampling according to their Q-values. In practice, we find that modifying the relabeling step is unnecessary, and use $\epsilon = \infty$ in our experiments, a decision we ablate in Appendix~\ref{appendix:filtered-relabeling}.

\subsection{How to embed reward maximization problems into OCBC problems?}
\label{sec:embed}
We assume that a set of reward functions is given: $\{r_e(s, a) \vert e \in \gE\}$. Without loss of generality, we assume that all reward functions are positive.
Let $r_\text{max} \triangleq \max_{s, a, e} r_e(s, a)$ be the maximum reward for any task, at any state and action. We then define an additional, ``failure'' task:
    $r_\text{fail}(s, a) = r_\text{max} - \sum_e r_e(s, a)$.
We then use these reward functions to determine how to label each state for OCBC: $p(e_t \mid s_t, a_t) = \frac{1}{r_\text{max}} r_e(s_t, a_t)$ for all $e \in \gE \cup \{\text{fail}\}.$

\section{Experiments}
\label{sec:experiments}

Our experiments study three questions. \textbf{First}, does OCBC fail to converge to the optimal policy in practice, as predicted by our theory? We test this claim on both a bandit problem and a simple 2D navigation problem. \textbf{Second}, on these same tasks, does our proposed fix to OCBC allow the method to converge to the optimal policy, as our theory predicts?
Our aim in starting with simple problems is to show that OCBC can fail to converge, \emph{even on extremely simple problems}.
\textbf{Third}, do these issues with OCBC persist on a suite of more challenging, goal-conditioned RL tasks~\citep{ghosh2020learning}?
We also use these more challenging tasks to understand when existing OCBC methods work well, and when the normalized version is most important.
Our experiments aim to understanding when OCBC methods can work and when our proposed normalization can boost performance, not to present an entirely new method that achieves state-of-the-art results. Appendix~\ref{appendix:details} contains experimental details. Code to reproduce the didactic experiments in available.\footnote{\url{https://github.com/ben-eysenbach/normalized-ocbc/blob/main/experiments.ipynb}
}.

\begin{figure}[t]
    \centering
    \vspace{-2em}
    \includegraphics[width=0.9\linewidth]{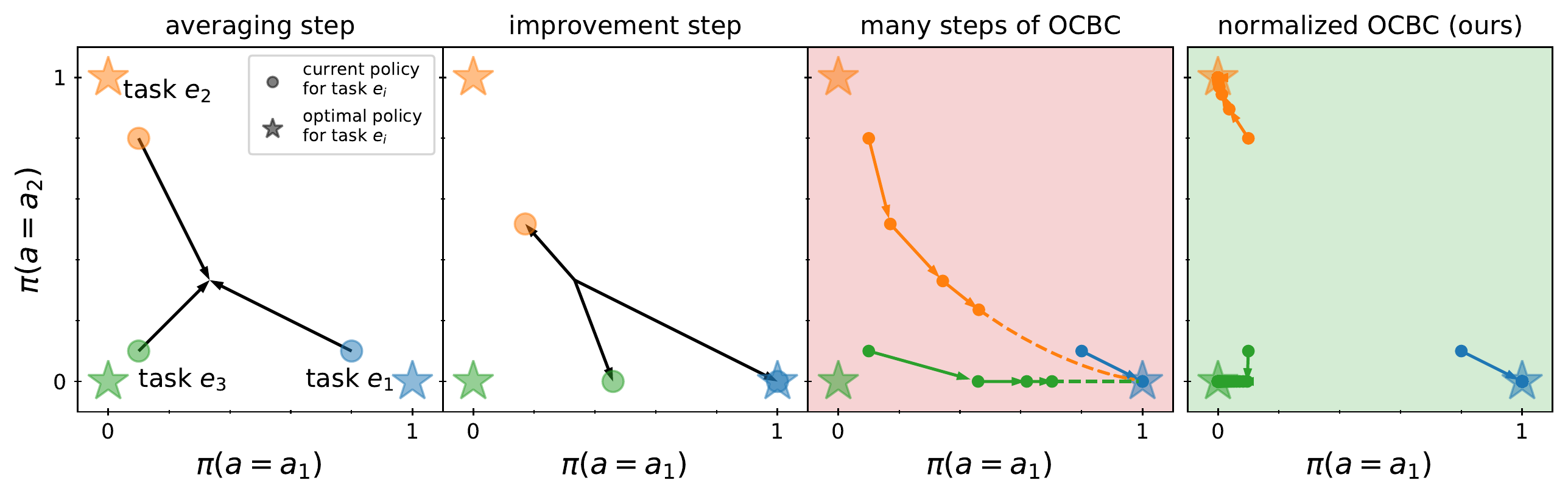}
    \vspace{-0.5em}
    \caption{ \footnotesize \textbf{OCBC = averaging + improvement.} \figleft\, On a bandit problem with three actions and three tasks, we plot the current and optimal policies for each task. OCBC first averages together the action distributions for each task and then \emph{(Left-Center)} learns new policies by reweighting the action distribution by the Q-values for each task.
    \emph{(Right-Center}) \, Iterating OCBC (averaging + improvement) produces the optimal policy for task $e_1$ but suboptimal policies for tasks $e_2$ and $e_3$; the final policies for tasks $e_2$ and $e_3$ are \emph{worse} than the initial policies.
    \figright \, Normalized OCBC does converge to the optimal policies for each task.
    }
    \vspace{-1.em}
    \label{fig:triangle}
\end{figure}

\paragraph{Bandit problem.}
Our first experiment looks at the learning dynamics of iterated OCBC on a bandit problem. We intentionally choose a simple problem to illustrate that OCBC methods can fail on even the simplest of problems. The bandit setting allows us to visualize the policy for each task as a distribution over actions. We consider a setting with three actions and three tasks, so the policy for each task can be represented as a point $(\pi(a = a_1 \mid e), \pi(a = a_2 \mid e))$, as shown in Fig.~\ref{fig:triangle}. We first visualize the averaging and improvement steps of OCBC. The averaging step \emph{(left)} collapses the three task-conditioned policies into a single, task-agnostic policy. The improvement step \emph{(center-left)} then produces task-conditioned policies by weighting the action distribution by the probability that each action solves the task. The {\color[rgb]{0.12,0.47,0.71}blue} task ($e_1$) is only solved using action $a_1$, so the new policy for task $e_1$ exclusively takes action $a_1$. The {\color[rgb]{0.17,0.63,0.17}green} task ($e_3$) is solved by both action $a_1$ (with probability 0.3) and action $a_3$ (with probability 0.4), so the new policy samples one of these two actions (with higher probability for $a_3$).
Iterating OCBC \emph{(center-right}) produces the optimal policy for task $e_1$ but suboptimal policies for tasks $e_2$ and $e_3$; the final policies for tasks $e_2$ and $e_3$ are \emph{worse} than the initial policies.
In contrast, normalized OCBC \emph{(right)} does converge to the optimal policies for each task. The fact that OCBC can fail on such an easy task suggests it may fail in other settings.

The bandit setting is a worst-case scenario for OCBC methods because the averaging step computes an \emph{equally}-weighted average of the three task-conditioned policies. In general, different tasks visit different states, and so the averaging step will only average together policies that visit similar states.

\begin{wrapfigure}[13]{r}{0.5\textwidth}
    \centering
    \vspace{-1.2em}
    \includegraphics[width=\linewidth]{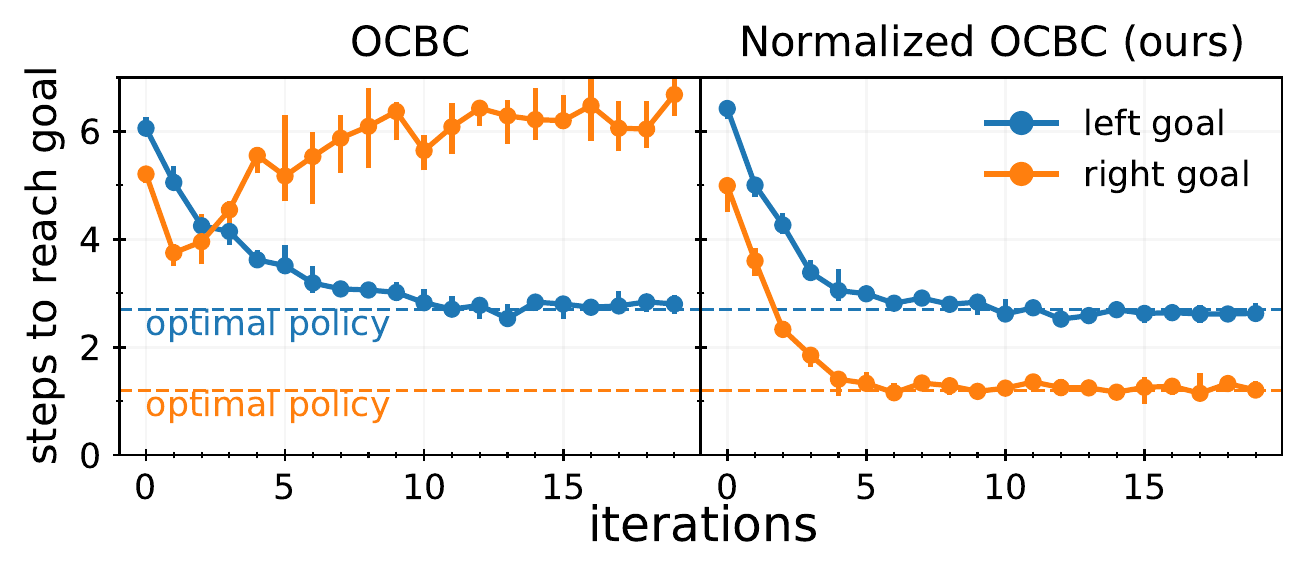}
    \vspace{-1em}
    \caption{\footnotesize \textbf{OCBC $\neq$ policy improvement}: 
    On a tabular environment with two goals, OCBC fails to learn the optimal policy for one of the goals, whereas normalized OCBC learns near-optimal policies for both goals.
    }
    \label{fig:gridworld}
    \vspace{-1em}
\end{wrapfigure}

\paragraph{2D navigation.}
Our next experiment tests whether OCBC can fail to converge on problems beyond bandit settings. As noted above, we expect the averaging step to have a smaller negative effect in non-bandit settings. To study this question, we use a simple 2D navigation task with two goals (see Appendix Fig.~\ref{fig:gridworld} for details).
We compare OCBC to normalized OCBC, measuring the average number of steps to reach the commanded goal and plotting results in Fig.~\ref{fig:gridworld}.
At each iteration, we compute the OCBC update (Eq.~\ref{eq:ocbc-obj}) exactly.
While both methods learn near-optimal policies for reaching the blue goal, only our method learns a near-optimal policy for the orange goal. OCBC's poor performance at reaching the orange goal can be explained by looking at the averaging step. Because the blue goal is commanded nine times more often than the orange goal, the averaging policy $p(a \mid s)$ is skewed towards actions that lead to the blue goal. Removing this averaging step, as done by normalized OCBC, results in learning the optimal policy.

\begin{figure}[t]
    \centering
    \vspace{-1em}
    \includegraphics[width=0.9\linewidth]{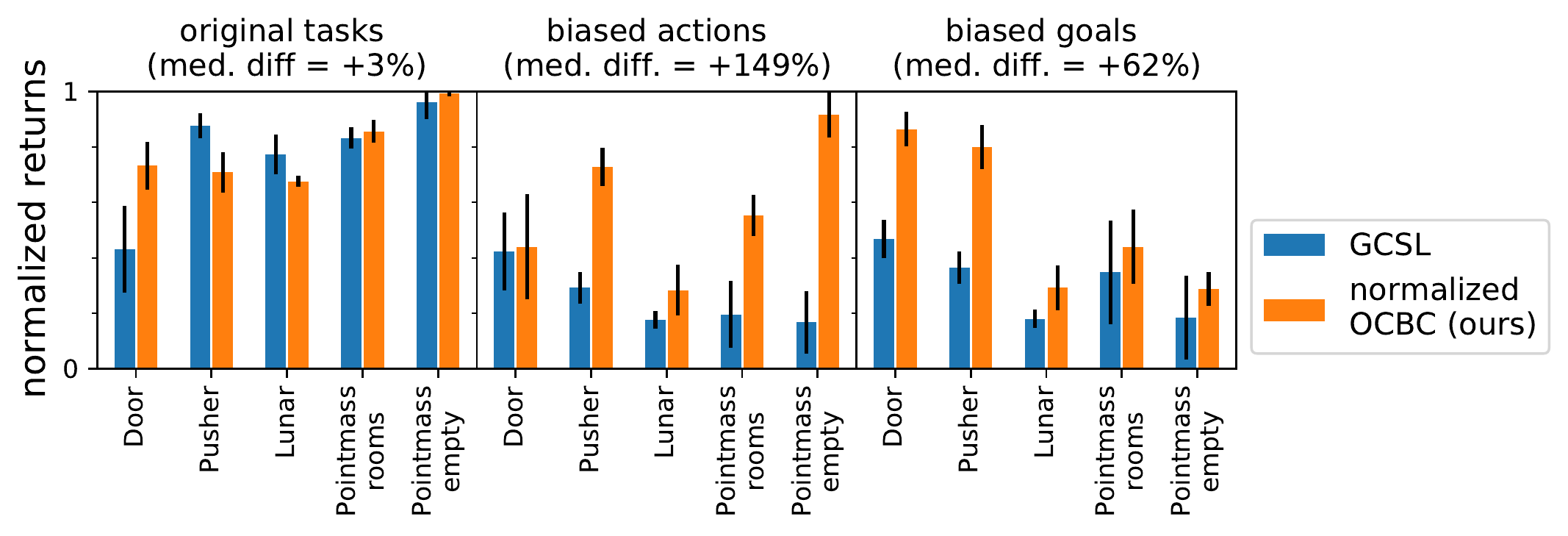}
    \vspace{-1em}
    \caption{\footnotesize \textbf{Goal-conditioned RL benchmark}. We compared normalized OCBC to a prototypical implementation of OCBC (GCSL~\citep{ghosh2020learning}), using the tasks proposed in that prior work.
    \figleft\, On the original tasks, both methods perform comparably. On versions of the tasks with \figcenter \, skewed action distributions or \figright \, goal distributions, normalized OCBC performs better, with a median improvement of $\sim149\%$ and $\sim62\%$.}
    \vspace{-1.5em}
    \label{fig:results}
\end{figure}

\paragraph{Comparison on benchmark tasks.}
So far, our experiments have confirmed our theoretical predictions that OCBC \emph{can} fail to converge; our analysis does not suggest that OCBC methods will \emph{always} fail. Our next experiments study whether this failure occurs in higher-dimensional tasks. We compare to a recent and prototypical implementation of OCBC, GCSL~\citep{ghosh2020learning}. To give this baseline a strong footing, we use the goal-reaching benchmark proposed in that paper, reporting \emph{normalized} returns.
As shown in Fig.~\ref{fig:results} \figleft, OCBC and normalized OCBC perform comparably on these tasks. The median improvement from normalized OCBC is a negligible $+3\%$. This result is in line with prior work that finds that OCBC methods can perform well~\citep{ding2019goal, sun2019policy, ghosh2020learning}, despite our theoretical results. \looseness=-1

We hypothesize that OCBC methods will perform worst in settings where the averaging step has the largest effect, and that normalization will be important in these settings. We test this hypothesis by modifying the GCSL tasks in two ways, by biasing the action distribution and by biasing the goal distribution. See Appendix~\ref{appendix:details} for details.
We show results in Fig.~\ref{fig:results}. On both sets of imbalanced tasks, our normalized OCBC significantly outperforms OCBC. Normalized OCBC yields a median improvement of $+149\%$ and $+62\%$ for biased settings, supporting our theoretical predictions about the importance of normalization. These results also suggest that existing benchmarks may be accidentally easy for the prior methods because the data distributions are close to uniform, an attribute that may not transfer to real-world problems.

\section{Conclusion}
\label{sec:conclusion}

The aim of this paper is to analyze how outcome-conditioned behavioral cloning relates to training optimal reward-maximizing policies.
While prior work has pitched these methods as an attractive alternative to standard RL algorithms, it has remained unclear whether these methods actually optimize any control objective. Understanding how these methods relate to reward maximization is important for predicting when they will work and for diagnosing their failures. Our main result is a connection between OCBC methods and Q-values, a connection that helps relates OCBC methods to many prior methods while also explaining why OCBC can fail to perform policy improvement.
Based on our analysis, we propose a simple change that makes OCBC methods perform policy iteration. While the aim of this work was not to propose an entirely new method, simple experiments did show that these simple changes are important for guaranteeing convergence on simple problems.

One limitation of our analysis is that it does not address the effects of function approximation error and sampling error. As such, we showed that the problem with OCBC will persist even when using arbitrarily expressive and tabular policies trained on unlimited data. The practical performance of OCBC methods, and the normalized counterparts that we introduce in this paper, will be affected by these sorts of errors, and analyzing these is a promising direction for future work. A second limitation of our analysis is that it does not focus on how the set of tasks should be selected, and what distribution over tasks should be commanded. Despite these limitations, we believe our analysis may be useful for designing simple RL methods that are guaranteed to maximize returns.

{\footnotesize
\vspace{2em}
\paragraph{Acknowledgements.}
Thanks to Lisa Lee, Raj Ghugare, and anonymous reviewers for feedback on previous versions of this paper. We thank Keiran Paster for helping us understand the connections with prior work. This material is supported by the Fannie and John Hertz Foundation and the NSF GRFP (DGE1745016).

}

\newpage

\appendix

\section{Should you relabel with failed outcomes?}
\label{appendix:fail}

While our results show that OCBC may or may not yield a better policy, they also suggest that the choice of tasks $\gE$ and the distribution of commanded tasks $p_e(e)$ affect the performance of OCBC.
When a user wants to apply an OCBC method, they have to choose the set of tasks, which states will be labeled with which tasks, and which tasks will be commanded for data collection.
For example, suppose that a human user wants to learn policies for navigating left and right, as shown in Fig.~\ref{fig:fail}. Should every state be labeled as a success for the left or right task, or should some states be labeled as failures for both tasks? If some states are labeled as failures, should the policy be commanded to fail during data collection?

Our two-step decomposition of OCBC provides guidance on this choice. Using a task for data collection means that the policy for that task will be included in the average policy at the next iteration. So one should choose the commanded task distribution so that this average policy is a good \emph{prior} for the tasks that one wants to solve. Assuming that a user does not want to learn a policy for failing, then the task $e_\text{fail}$ should not be commanded for data collection.

Even if the failure task is never commanded for data collection, $\pi(a \mid s, e_\text{fail})$, should you relabel experience using this failure task? In effect, this question is about how to construct the set of tasks, $\gE$. If a task will never be commanded (i.e., $p_e(e) = 0$), then relabeling experience using that task is equivalent to discarding that experience. Indeed, prior work has employed similar procedures for solving reward-maximization problems, discarding experience that receives rewards below some threshold~(\citet{oh2018self}, the 10\% BC baseline in~\citet{chen2021decision, emmons2021rvs}).
The decisions to throw away data and the choice of how much data to discard amount to a bias-variance trade off. Discarding a higher percentage of data (i.e., labeling more experience with failure tasks) means that the remaining data corresponds to trials that are more successful. However, discarding data shrinks the total amount of data available for imitation, potentially increasing the variance of the resulting policy estimators. Thus, the amount of experience to relabel with failed outcomes should depend on the initial quantity of data; more data can be labeled as a failure (and hence discarded) if the initial quantity of data is larger.

We demonstrate this effect in a didactic experiment shown in Fig.~\ref{fig:fail}. On a 1-dimensional gridworld, we attempt to learn policies for moving right and left. We consider two choices for labeling:
label every state as a success for the left/right task (except the initial state), or label only the outermost states as successes for the left/right task and label all the other states as failures. Fig.~\ref{fig:fail} \emph{(left)} visualizes this choice, along with the task rewards we will use for evaluation. Labeling many states as failure results in discarding many trials, but increases the success rate of the remaining trials. The results (Fig.~\ref{fig:fail} \emph{(right)}) show that the decision to include a ``failure'' task can be useful, if OCBC is provided with enough data. In the low-data regime, including the ``failure'' task results in discarding too much of the data, decreasing performance.

\begin{figure*}[t]
    \centering
    \begin{subfigure}[c]{0.57\textwidth}
    \includegraphics[width=\linewidth]{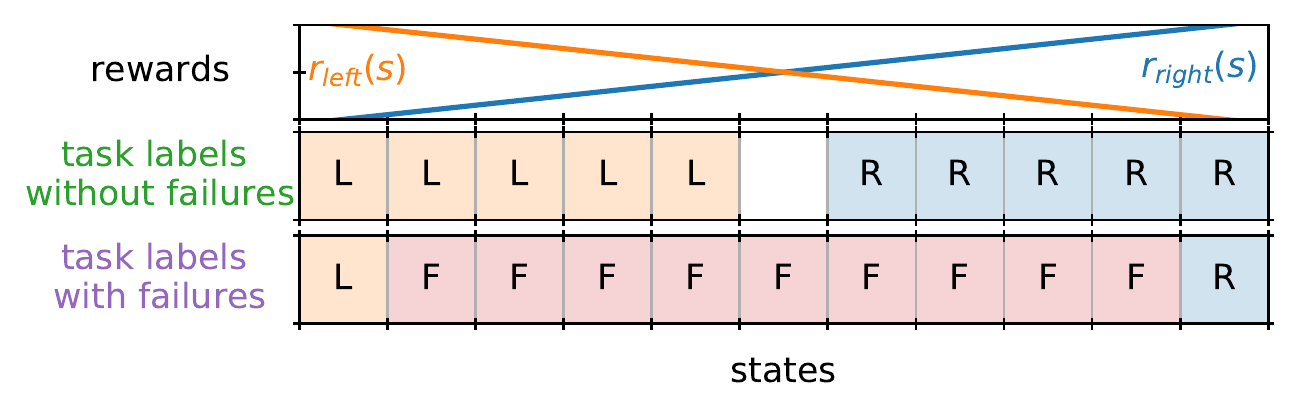}
    \end{subfigure}
    \hfill
    \begin{subfigure}[c]{0.42\textwidth}
    \includegraphics[width=\linewidth]{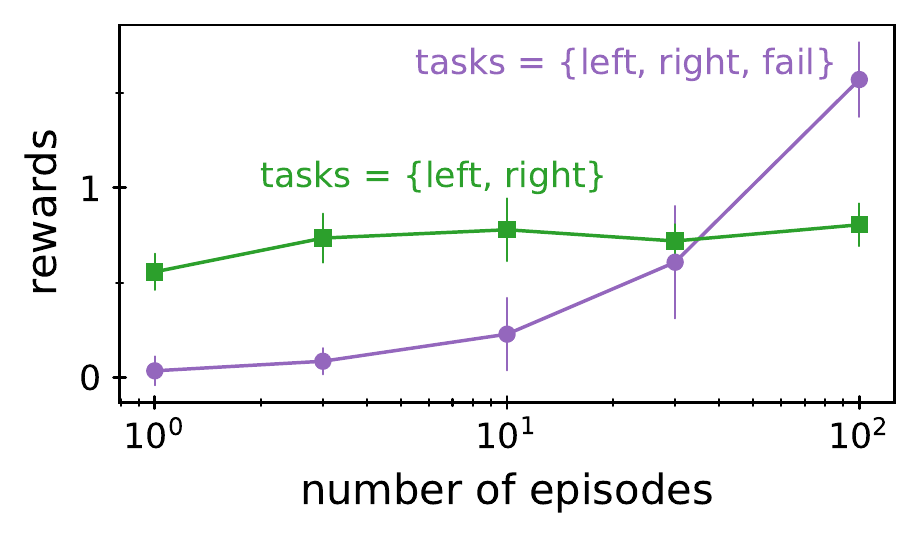}
    \end{subfigure}
    \caption{\footnotesize \textbf{Relabeling with failures.}
    \figleft \, On a 1D gridworld, we want to learn policies that move far left or far right. To study the effect of labeling some states as failures, we compare versions of OCBC  \emph{(left-center)} where all states are treated as success for the left/right task, versus \emph{(left-bottom)} where some states are treated as failures. We evaluate performance using the average reward \emph{(left-top)}.
    \figright \, Labeling some states as failures decreases bias but increases variance: the resulting policies perform worse in the low-data setting but better in the high-data setting.}
    \label{fig:fail}
    \vspace{-1em}
\end{figure*}

\clearpage
\section{Filtered Relabeling}
\label{appendix:filtered-relabeling}

\begin{wrapfigure}[10]{r}{0.5\textwidth}
    \centering
    \vspace{-1.8em}
    \includegraphics[width=\linewidth]{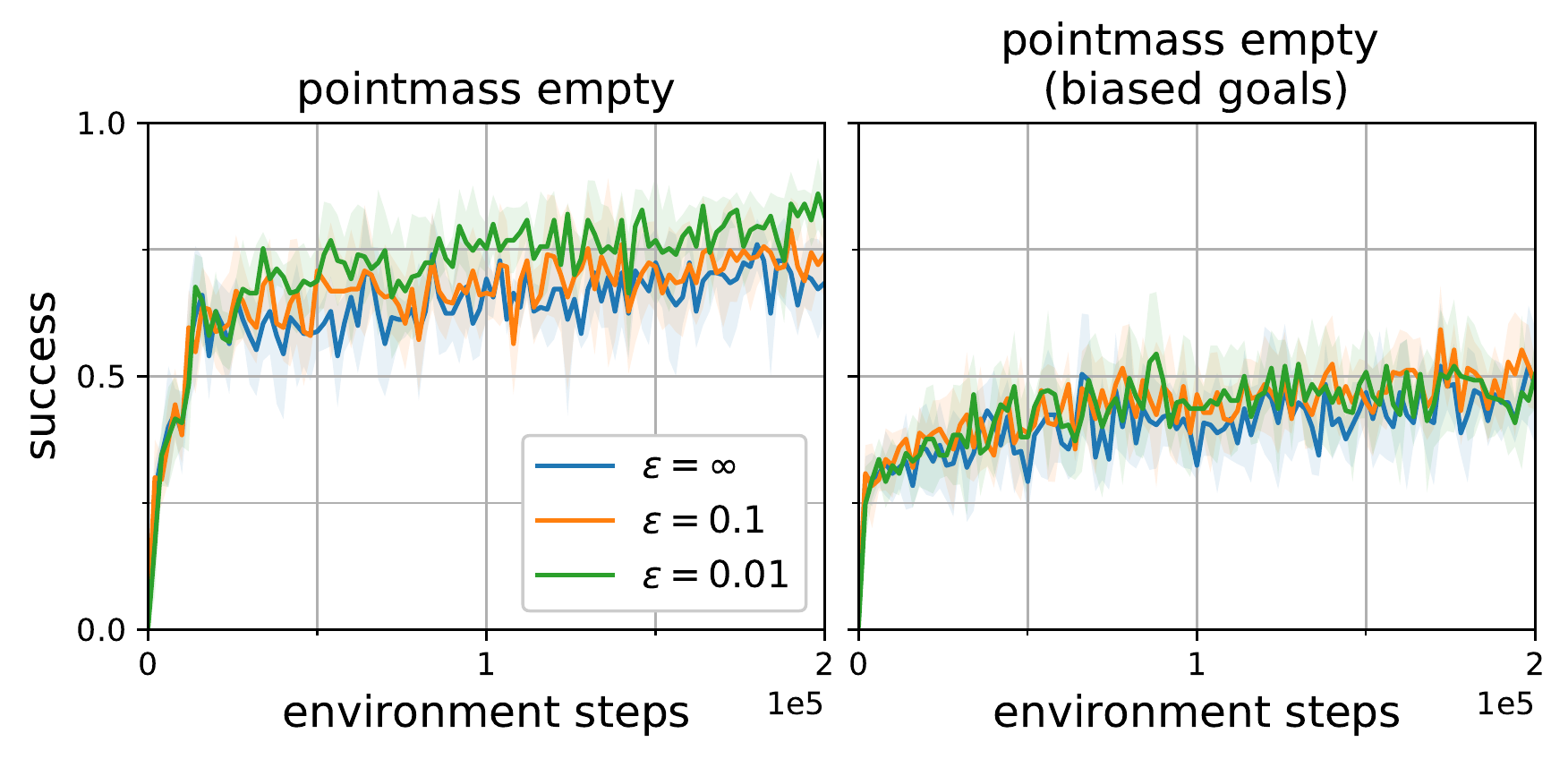}
    \vspace{-1.2em}
    \caption{\footnotesize \textbf{Filtered relabeling.}}
    \label{fig:filtered-relabeling}
\end{wrapfigure}

Our analysis in Sec.~\ref{sec:method} suggested that some experience should not be relabeled with some tasks. Specifically, to prove convergence (Lemma~\ref{lemma:pi}), normalized OCBC must discard experience if the importance weights $\frac{\beta(\tau \mid e')}{\beta(\tau \mid e)}$ are too large or too small. We ran an experiment to study the effect of the clipping parameter, $\epsilon$. For this experiment, we used the ``pointmass rooms'' task. We ran this experiment twice, once using an initially-random dataset and once using a biased goals dataset (as in Fig.~\ref{fig:results}). We expected to find that filtered relabeling had a negligible effect when $\epsilon$ was large, but would hurt performance when $\epsilon$ was small (because it reduces the effective dataset size).

We show results in Fig.~\ref{fig:filtered-relabeling}, finding that filtered relabeling does not decrease performance. On the surface, these results are surprising, as they indicate that throwing away many training examples does not decrease performance. However, our theory in Sec.~\ref{sec:method} suggests that those transitions with very large or small importance weights may preclude convergence to the optimal policy. So, in the balance, throwing away potentially-harmful examples might be beneficial, even though it decreases the total amount of experience available for training.

\section{Analysis}
\label{appendix:proofs}

\subsection{Proof that Reweighting Performs Policy Improvement}

\begin{proposition}
Let a behavior policy $\beta(a \mid s)$ and reward function $r(s, a)$ be given. Let $Q^\beta$ denote the corresponding Q-function. Define the reweighted policy as
\begin{equation*}
    \tilde{\pi}(a \mid s) \triangleq \frac{Q^\beta(a \mid s) \beta(a \mid s)}{\int Q^\beta(a' \mid s) \beta(a' \mid s) da'}.
\end{equation*}
Then this reweighting step corresponds to policy improvement:
\begin{equation*}
    E_{\tilde{\pi}}\left[Q^\beta(s, a) \right] \ge E_{\beta}\left[Q^\beta(s, a) \right].
\end{equation*}
\end{proposition}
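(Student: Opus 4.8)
The plan is to reduce the claim to a pointwise-in-$s$ statement and then recognize the left-hand side as a ratio of the second to the first moment of $Q^\beta$ under $\beta$, at which point the inequality becomes nothing more than nonnegativity of variance. Fix a state $s$. Since the rewards lie in $[0,1]$, the Q-function is nonnegative, so the normalizing constant $Z(s) \triangleq \int Q^\beta(s, a') \beta(a' \mid s) \, da' = \E_{\beta(\cdot \mid s)}[Q^\beta(s, a)]$ is nonnegative; we may assume it is strictly positive, since otherwise $\tilde{\pi}$ is undefined and there is nothing to prove.

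Next I would substitute the definition of $\tilde{\pi}$ directly into the expected Q-value and simplify. Computing,
\begin{equation*}
    \E_{\tilde{\pi}(\cdot \mid s)}\left[Q^\beta(s, a)\right] = \int \frac{Q^\beta(s,a)\,\beta(a\mid s)}{Z(s)}\, Q^\beta(s, a) \, da = \frac{\E_{\beta(\cdot\mid s)}\left[Q^\beta(s,a)^2\right]}{\E_{\beta(\cdot \mid s)}\left[Q^\beta(s,a)\right]}.
\end{equation*}
The target inequality $\E_{\tilde{\pi}}[Q^\beta] \ge \E_{\beta}[Q^\beta]$ therefore becomes, after multiplying through by the positive quantity $Z(s) = \E_{\beta}[Q^\beta]$,
\begin{equation*}
    \E_{\beta(\cdot \mid s)}\left[Q^\beta(s,a)^2\right] \ge \left(\E_{\beta(\cdot \mid s)}\left[Q^\beta(s,a)\right]\right)^2.
\end{equation*}

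This last inequality is exactly the statement that $\Var_{\beta(\cdot \mid s)}[Q^\beta(s,a)] \ge 0$, which holds for any random variable; equivalently it is Jensen's inequality for the convex map $x \mapsto x^2$, or Cauchy--Schwarz. This establishes the pointwise inequality for every $s$, and since it holds for each state separately, it survives taking any expectation over states, giving the stated result. The main (and essentially only) obstacle is bookkeeping: getting the normalization $Z(s)$ correct and confirming it is strictly positive so that multiplying through does not flip the inequality, which is guaranteed by $r \in [0,1]$ forcing $Q^\beta \ge 0$. As a sanity check, the inequality is strict unless $Q^\beta(s, \cdot)$ is $\beta$-almost-surely constant, i.e.\ unless $\beta$ is already acting optimally at $s$, mirroring the classical policy-improvement theorem.
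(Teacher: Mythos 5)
Your proof is correct and follows essentially the same route as the paper's: substitute the definition of $\tilde{\pi}$, recognize $\E_{\tilde{\pi}}[Q^\beta]$ as the ratio of the second to the first moment of $Q^\beta$ under $\beta$, and conclude by Jensen's inequality (equivalently, nonnegativity of variance). Your additional care about the strict positivity of the normalizer $Z(s)$ and the remark on when equality holds are welcome refinements that the paper's proof leaves implicit.
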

\begin{proof}
Let state $s$ be given.
We start by substituting the definition of reweighted policy and then applying Jensen's inequality:
\begin{align*}
    E_{\tilde{\pi}}\left[Q^\beta(s, a) \right]
    &= \int \tilde{\pi}(a \mid s) Q^\beta(s, a) da \\
    &= \int \frac{Q^\beta(a \mid s) \beta(a \mid s)}{\int Q^\beta(a' \mid s) \beta(a' \mid s) da'} Q^\beta(s, a) da \\
    &= \frac{\int (Q^\beta(s, a))^2 \beta(a \mid s) da}{\int Q^\beta(a \mid s) \beta(a \mid s) da} \\
    &= \frac{\E_\beta[(Q^\beta(s, a))^2]}{\E_\beta[Q^\beta(s, a)]} \\
    &\ge \frac{\E_\beta[Q^\beta(s, a)]^2}{\E_\beta[Q^\beta(s, a)]} \\
    &= \E_\beta[Q^\beta(s, a)].
\end{align*}

\end{proof}

\subsection{Normalized OCBC performs Policy Improvement}

In this section, we prove that normalized OCBC performs policy improvement (Lemma~\ref{lemma:pi})

\begin{proof}
Our proof will proceed in two steps. The first step analyzes Q-values. Normalized OCBC uses Q-values from the average policy, but we need to analyze the task-conditioned Q-values to prove policy improvement. So, in the first step we prove that the  Q-values for the average policy and task-conditioned policies are similar. Then, in the second step, we prove that using approximate Q-values for policy iteration yields an approximately-optimal policy. This step is a direct application of prior work~\citep{bertsekas1996neuro}.

\paragraph{Step 1.}
The first step is to prove that the average Q-values are close to the task-conditioned Q-values. Below, we will use $R_e(\tau) \triangleq \sum_{t=0}^\infty \gamma^t r_e(s_t, a_t)$:
{\footnotesize \begin{align*}
    \left|Q^{\beta(\cdot \mid \cdot, e)}(s, a, e) - Q^{\beta(\cdot \mid \cdot, e')}(s, a, e) \right|
    &= \left|\int \beta(\tau \mid s, a, e) R_e(\tau) d\tau - \int \beta(\tau \mid s, a, e') R_e(\tau) d\tau \right| \\
    &= \left|\int \left( \beta(\tau \mid s, a, e) - \beta(\tau \mid s, a, e') \right) R_e(\tau) d\tau \right| \\
    &= \left|\int \beta(\tau \mid s, a, e) \left( 1 - \frac{\beta(\tau \mid s, a, e')}{\beta(\tau \mid s, a, e)} \right) R_e(\tau) d\tau \right| \\
    &\le \int \left|\beta(\tau \mid s, a, e) \left( 1 - \frac{\beta(\tau \mid s, a, e')}{\beta(\tau \mid s, a, e)} \right) \right| d \tau \cdot \max_\tau \left| R_e(\tau) d\tau \right| \\
    &\le \int \beta(\tau \mid s, a, e) \left| 1 - \frac{\beta(\tau \mid s, a, e')}{\beta(\tau \mid s, a, e)} \right| d \tau \cdot 1 \\
    &= \E_{\beta(\tau \mid s, a, e)} \left[\left| 1 - \frac{\beta(\tau \mid s, a, e')}{\beta(\tau \mid s, a, e)} \right| \right] \\
    & \le \epsilon.
\end{align*}}\!\!
The first inequality is an application of H\"older's inequality. The second inequality comes from the fact that $r_e(s, a) \in [0, 1 - \gamma]$, so the discounted some of returns satisfies $R_e(\tau) \in [0, 1]$. The final inequality comes from how we perform the relabeling. Thus, we have the following bound on Q-values:
\begin{equation}
    Q^{\beta(\cdot \mid \cdot, e)}(s, a, e) - \epsilon \le Q^{\beta(\cdot \mid \cdot, e')}(s, a, e) \le Q^{\beta(\cdot \mid \cdot, e)}(s, a, e) + \epsilon.
\end{equation}
Since Q-values for the average policy are a convex combination of these Q-values ($Q^{\beta(\cdot \mid \cdot)}(s, a) = \int Q^{\beta(\cdot \mid \cdot, e')}(s, a, e) p^\beta(e' \mid s, a) de'$), this inequality also holds for the Q-values of the average policy:
\begin{equation}
    Q^{\beta(\cdot \mid \cdot, e)}(s, a, e) - \epsilon \le Q^{\beta(\cdot \mid \cdot)}(s, a, e) \le Q^{\beta(\cdot \mid \cdot, e)}(s, a, e) + \epsilon.
\end{equation}

\paragraph{Step 2.}
At this point, we have shown that normalized OCBC is equivalent to performing policy iteration with an approximate Q-function. In the second step, we employ~\citep[Proposition 6.2]{bertsekas1996neuro} to argue that policy iteration with an approximate Q-function converges to an approximately-optimal policy:
\begin{equation}
    \limsup_{k \rightarrow \infty} \|V^{\pi^*(\cdot \mid \cdot, e)} - V^{\pi_k(\cdot \mid \cdot, e)}\|_\infty \le \frac{2\gamma}{(1 - \gamma)^2}\epsilon.
\end{equation}
The L-$\infty$ norm means that this inequality holds for the values of every state. Thus, it also holds for states sampled from the initial state distribution $p_0(s_0)$:
\begin{equation}
    \limsup_{k \rightarrow \infty} \left| \E_{s_0 \sim p_0(s_0)}\left[V^{\pi^*(\cdot \mid \cdot, e)(s_0)}\right] - \E_{s_0 \sim p_0(s_0)}\left[V^{\pi_k(\cdot \mid \cdot, e)}(s_0) \right] \right| \le \frac{2\gamma}{(1 - \gamma)^2}\epsilon.
\end{equation}
The value of the initial state is simply the expected return of the policy:
\begin{equation}
    \limsup_{k \rightarrow \infty} \left| \E_{\pi^*(\tau \mid e)}\left[\sum_{t=0}^\infty \gamma^t r_e(s_t, a_t) \right] - \E_{\pi_k(\tau \mid e)}\left[ \sum_{t=0}^\infty \gamma^t r_e(s_t, a_t) \right] \right| \le \frac{2\gamma}{(1 - \gamma)^2}\epsilon.
\end{equation}
Rearranging terms, we observe that normalized OCBC converges to an approximately-optimal policy:
\begin{equation}
    \limsup_{k \rightarrow \infty} \E_{\pi_k(\tau \mid e)}\left[\sum_{t=0}^\infty \gamma^t r_e(s_t, a_t) \right] \ge \max_{\pi^*(\cdot \mid \cdot, e)} \E_{\pi^*(\tau \mid e)}\left[ \sum_{t=0}^\infty \gamma^t r_e(s_t, a_t) \right] - \frac{2\gamma}{(1 - \gamma)^2}\epsilon.
\end{equation}

\end{proof}

\begin{figure}
    \centering
    \includegraphics[width=0.3\textwidth]{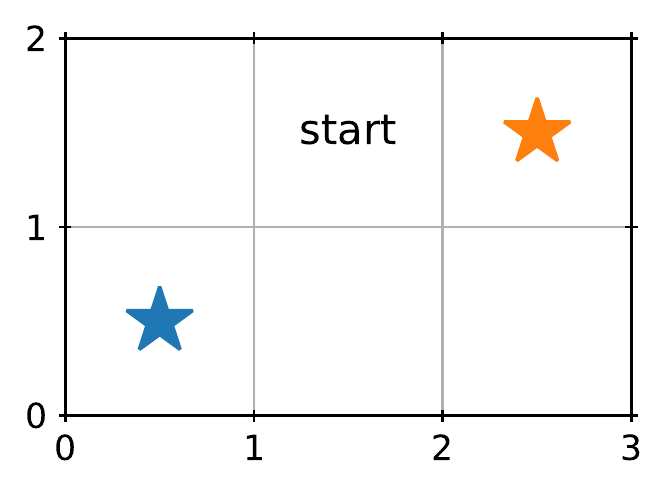}
    \caption{Gridworld environment used for the experiment in Fig.~\ref{fig:gridworld}}
    \label{fig:gridworld-env}
\end{figure}

\section{Experimental Details}
\label{appendix:details}

We have released a Jupyter notebook\footnote{\url{https://github.com/ben-eysenbach/normalized-ocbc/blob/main/experiments.ipynb}} for reproducing all of the small-scale experiments (Figures~\ref{fig:ocbc},~\ref{fig:triangle},~\ref{fig:gridworld} and~\ref{fig:fail}). This notebook runs in less than 60 seconds on a standard CPU workstation.

The comparison with GCSL was done by changing a few lines of code in the official GCSL implementation\footnote{\url{https://github.com/dibyaghosh/gcsl}} to support training a second (marginal) policy. We used all the default hyperparameters. We ran five random seeds for each of these experiments, and error bars denote the standard deviation.

\paragraph{Original tasks (Fig.~\ref{fig:results} \emph{(Left)}).}
This experiment was run in the online setting, following~\citet{ghosh2020learning}.
The pseudocode for these tasks is:
\begin{minted}{python}
replay_buffer = ReplayBuffer()
policy = RandomInitPolicy()
for t in range(num_episodes):
  experience = Collect(policy, original_goal_distribution)
  replay_buffer.extend(experience)
  policy = UpdatePolicy(policy, replay_buffer)
results = Evaluate(policy, original_goal_distribution)
\end{minted}

For this experiment, we evaluated the policies in two different ways: sampling actions from the learned policy, and taking the argmax from the policy distribution. We found that GCSL performed better when sampling actions, and normalized OCBC performed better when taking the argmax; these are the results that we reported in Fig.~\ref{fig:results} \figleft. For the biased experiments, we used the ``sampling'' approach for both GCSL and normalized OCBC.

\paragraph{Biased actions (Fig.~\ref{fig:results} \emph{(Center)}).}
This experiment was run in the offline setting. For data collection, we took a pre-trained GCSL policy and sampled an \emph{suboptimal} action (uniformly at random) with probability 95\%; with probability 5\%, we sampled the argmax action from the GCSL policy. The pseudocode for these tasks is:
\begin{minted}{python}
replay_buffer = ReplayBuffer()
while not replay_buffer.is_full():
  if env.is_done():
    s = env.reset()
  action_probs = expert_policy(s)
  best_action = argmax(action_probs)
  if random.random() < 0.95:
    a = random([a for a in range(num_actions) if a != best_action])
  else:
    a = best_action
  replay_buffer.append((s, a))
  s = env.step(a)
policy = RandomInitPolicy()
for t in range(num_episodes):
  policy = UpdatePolicy(policy, replay_buffer)
results = Evaluate(policy, original_goal_distribution)
\end{minted}

\paragraph{Biased goals (Fig.~\ref{fig:results} \emph{(Right)}).}
This experiment was run in the online setting. We introduced a bias to the goal space distribution of the environments by changing the way the goal is sampled. We used this biased goal distribution for both data collection and evaluation. The pseudocode for these tasks is:
\begin{minted}{python}
policy = RandomInitPolicy()
replay_buffer = ReplayBuffer()
for t in range(num_episodes):
  if not replay_buffer.is_full():
    experience = Collect(policy, biased_goal_distribution)
    replay_buffer.extend(experience)
  policy = UpdatePolicy(policy, replay_buffer)
results = Evaluate(policy, biased_goal_distribution)
\end{minted}

As the goal distribution for each environment is different, the respective bias introduced is different for each task:
\begin{itemize}[topsep=0pt, itemsep=0pt]
    \item \textbf{Pusher}: The four dimensional goal includes the $(x, y)$ positions of the hand and the puck. Instead of sampling for both positions independently, we just sample the puck-goal position and set the hand-goal position to be the same as the puck-goal position.
    \item \textbf{Door}: The 1-dimensional goal indicates the angle to open the door. We bias the goal space to just sample the goals from one half of the original goal space.
    \item \textbf{Pointmass Rooms}: The goal is 2-dimensional: the $(x, y)$ position of the goal. We sample the $x$ uniformly from $[-0.85,0.85]$ (the original range is $[-1,1]$), and then set the $y$ coordinate of the goal equal to the $x$ coordinate of the goal.
    \item \textbf{Pointmass Empty}: The goal is 2-dimensional: the $(x, y)$ position of the goal. We sample the $x$ coordinate uniformly from $[-0.9, -0.45]$, and set the $y$ coordinate of the goal equal to the $x$ coordinate of the goal.
    \item \textbf{Lunar}: The goal is 5-dimensional. We sample the $x$ coordinate (\texttt{g[0]}) uniformly from [-0.3,0.3]. The $y$ coordinate (\texttt{g[1]}) is fixed to 0. The lander angle (\texttt{g[2]}) is fixed to its respective \texttt{g\_low}. The last two values in the goal space(\texttt{g[3], g[4]}) are fixed to be 1, which implies that the ground-contact of legs is \texttt{True} at the goal location.
\end{itemize}

\end{document}